\newtheorem{theorem}{Theorem}
\newtheorem{lemma}[theorem]{Lemma}
\theoremstyle{remark}
\DeclarePairedDelimiter\ceil{\lceil}{\rceil}
\numberwithin{equation}{section}
\ifcvprfinal\pagestyle{empty}\fi
\begin{document}
	
	\title{Fast Spatio-Temporal Residual Network for Video Super-Resolution}
	\newcommand*\samethanks[1][\value{footnote}]{\footnotemark[#1]}
	\author[1]{Sheng Li}
	\author[2]{Fengxiang He}
	\author[1]{Bo Du\thanks{Corresponding author.}}
	\author[1]{Lefei Zhang\samethanks}
	\author[3]{Yonghao Xu}
	\author[2]{Dacheng Tao}
	
	\affil[1]{School of Computer Science, Wuhan University, China}
	\affil[2]{UBTECH Sydney AI Centre, SCS, FEIT, the University of Sydney, Australia}
	\affil[3]{The State Key Laboratory of Information Engineering in Surveying, Mapping, and Remote Sensing, Wuhan University, China}
	\affil[ ]{\tt\small \{shli, remoteking, zhanglefei\}@whu.edu.cn  \{fengxiang.he, dacheng.tao\}@sydney.edu.au yonghaoxu@ieee.org}

	\maketitle
	\thispagestyle{empty}
	
	\begin{abstract}
    Recently, deep learning based video super-resolution (SR) methods have achieved promising performance. To simultaneously exploit the spatial and temporal information of videos, employing 3-dimensional (3D) convolutions is a natural approach. However, straight utilizing 3D convolutions may lead to an excessively high computational complexity which restricts the depth of video SR models and thus undermine the performance. In this paper, we present a novel fast spatio-temporal residual network (FSTRN) to adopt 3D convolutions for the video SR task in order to enhance the performance while maintaining a low computational load. Specifically, we propose a fast spatio-temporal residual block (FRB) that divide each 3D filter to the product of two 3D filters, which have considerably lower dimensions. Furthermore, we design a cross-space residual learning that directly links the low-resolution space and the high-resolution space, which can greatly relieve the computational burden on the feature fusion and up-scaling parts. Extensive evaluations and comparisons on benchmark datasets validate the strengths of the proposed approach and demonstrate that the proposed network significantly outperforms the current state-of-the-art methods.
	\end{abstract}
	
	\begin{figure}[tbp]
		\centering
		\subfloat[EDSR]{
			\begin{minipage}[t]{0.3\linewidth}
				\centering
				\includegraphics[height=2.5in]{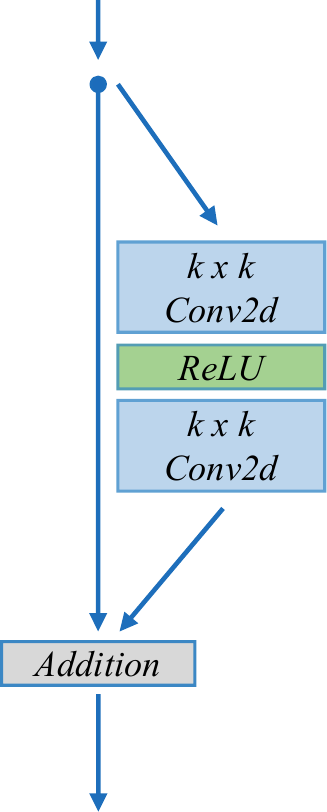} 
				\label{fig:Res_block_EDSR}
			\end{minipage}
		}
		\subfloat[3D residual block]{
			\begin{minipage}[t]{0.3\linewidth}
				\centering
				\includegraphics[height=2.5in]{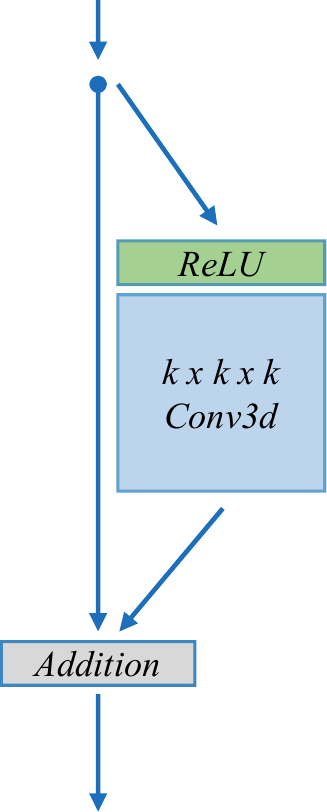}
				\label{fig:Res_block_Single_3D}
			\end{minipage}
		}
		\subfloat[Proposed FRB]{
			\begin{minipage}[t]{0.3\linewidth}
				\centering
				\includegraphics[height=2.5in]{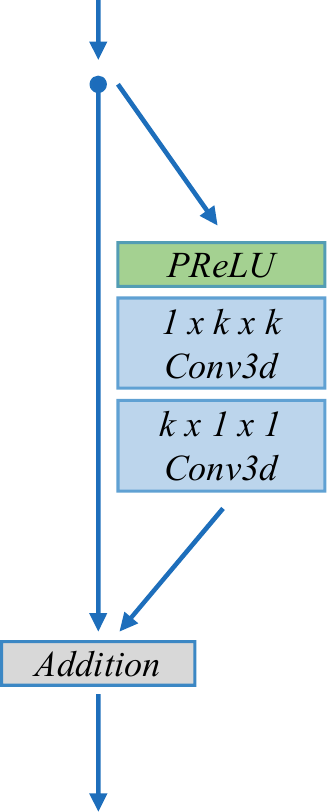}
				\label{fig:Res_block_Proposed}
			\end{minipage}
		}
		\centering
		\caption{Comparison of \protect\subref{fig:Res_block_EDSR} residual block in EDSR\cite{DBLP:conf/cvpr/LimSKNL17}, \protect\subref{fig:Res_block_Single_3D} single C3D residual block, and \protect\subref{fig:Res_block_Proposed} the proposed FRB.}
		\label{Res_block}
	\end{figure}
	
	\begin{figure*}[htbp]
		\centering
		\begin{minipage}[t]{\linewidth}
			\centering
			\includegraphics[width=1\linewidth]{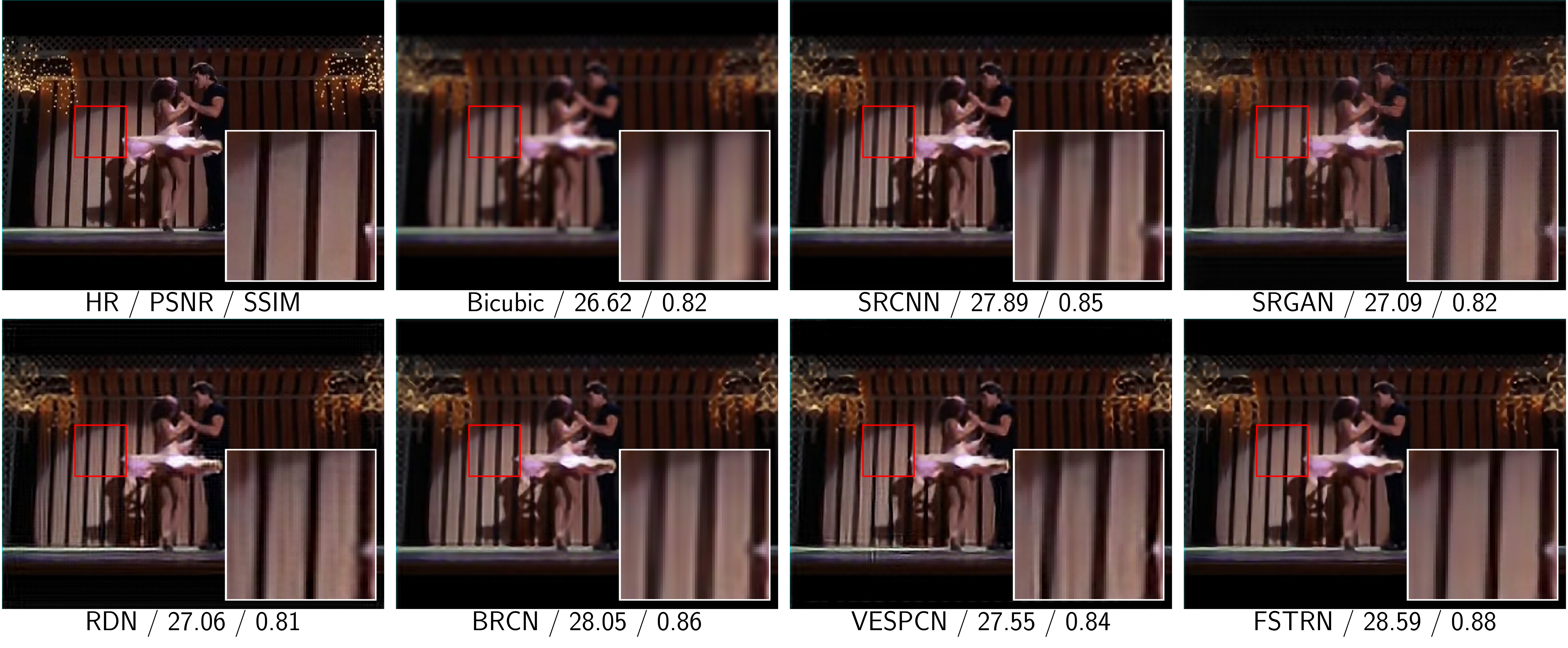} 
		\end{minipage}
		
		\caption{Visually observations on the orginal frames and the SR results on the Dancing video at $\times4$ SR, it is noticeable that the proposed FSTRN approach not only achieves the highest PSNR and SSIM values, but also restores the finest texture with the fewest artifacts.}
		\label{single_frame_comparison}
	\end{figure*}

	\section{Introduction}
	
	Super-resolution (SR) addresses the problem of estimating a high-resolution (HR) image or video from its low-resolution (LR) counterpart. SR is wildly used in various computer vision tasks, such as satellite imaging \cite{DBLP:conf/aaai/CaoJWL16} and surveillance imaging \cite{DBLP:conf/aaai/JiaoZWZG18}. 
	Recently, deep learning based methods have been a promising approach to solve SR problem \cite{DBLP:conf/eccv/DongLHT14,DBLP:conf/cvpr/KimLL16a,DBLP:conf/cvpr/LimSKNL17,DBLP:journals/tip/LiuWFLWCWH18,DBLP:journals/tip/LiuWWYHH16,DBLP:conf/iccv/WangLYHH15}. A straight idea for video SR is to perform single image SR frame by frame. However, it ignores the temporal correlations among frames, the output HR videos usually lack the temporal consistency, which may emerge as spurious flickering artifacts \cite{DBLP:conf/cvpr/ShiCHTABRW16}.
	
	Most existing methods for the video SR task utilize the temporal fusion techniques to extract the temporal information in the data, such as motion compensation \cite{DBLP:conf/cvpr/CaballeroLAATWS17,DBLP:conf/iccv/TaoGLWJ17}, which usually need manually deigned structure and much more computational consumption. To automatically and simultaneously exploit the spatial and temporal information, it is natural to employ 3-dimensional (3D) filters to replace 2-dimensional (2D) filters. However, the additional dimension would bring much more parameters and lead to an excessively heavy computational complexity. This phenomenon severely restricts the depths of the neural network adopted in the video SR methods and thus undermine the performance \cite{DBLP:journals/pami/HuangWW18}.
	
	
	Since there are considerable similarities between the input LR videos and the desired HR videos, the residual connection is widely involved in various SR networks \cite{DBLP:conf/cvpr/KimLL16a,DBLP:conf/cvpr/LedigTHCCAATTWS17,DBLP:conf/cvpr/LimSKNL17}, fully demonstrating the residual connection advantages. However, the residual identity mapping for SR task are beyond sufficient usage, it is either applied on HR space \cite{DBLP:conf/cvpr/KimLL16a,DBLP:conf/iccv/TaiYLX17}, largely increasing the computational complexity of the network, or applied on the LR space to fully retain the information from the original LR inputs \cite{DBLP:conf/icip/XuCSD18}, imposing heavy burdens on the feature fusion and upscaling stage at the final part of networks. 
	
	
	To address these problems, we propose fast spatio-temporal residual network (FSTRN) (Fig. \ref{fig:SR_architecture}) for video SR. It's difficult and impractical to build a very deep spatio-temporal network directly using original 3D convolution (C3D) due to high computational complexity and memory limitations. So we propose fast spatio-temporal residual block (FRB) (Fig. \ref{fig:Res_block_Proposed}) as the building module for FSTRN, which consists of skip connection and spatio-temporal factorized C3Ds. The FRB can greatly reduce computational complexity, giving the network the ability to learn spatio-temporal features simultaneously while guaranteeing computational efficiency.
	Also, global residual learning (GRL) are introduced to utilize the similarities between the input LR videos and the desired HR videos. On the one hand, we adopt to use LR space residual learning (LRL) in order to boost the feature extraction performance. On the other hand, we further propose a cross-space residual connection (CRL) to link the LR space and HR space directly. Through CRL, LR videos are employed as an ``anchor'' to retain the spatial information in the output HR videos.
	
	Theoretical analyses of the proposed method provide a generalization bound $\mathcal O(1/\sqrt{n})$ with no explicitly dependence on the network size ($n$ is the sample size), which guarantees the feasibility of our algorithm on unseen data. Thorough empirical studies on benchmark datasets evaluation validate the superiority of the proposed FSTRN over existing algorithms.
	
	In summary, the main contributions of this paper are threefold:
	\begin{itemize}
		\item We propose a novel framework fast spatio-temporal residual network (FSTRN) for high-quality video SR. The network can exploit spatial and temporal information simultaneously. By this way, we retain the temporal consistency and ease the problem of spurious flickering artifacts.  
		\item We propose a novel fast spatio-temporal residual block (FRB), which divides each 3D filter to the product of two 3D filters which have significantly lower dimensions. By this way, we significantly reduce the computing load while enhance the performance through deeper neural network architectures.
		\item We propose to employ global residual learning (GRL) which consist of LR space residual learning (LRL) and cross-space residual learning (CRL) to utilize the considerable similarity between the input LR videos and the output HR videos, which significantly improve the performance. 
	\end{itemize}
	
	\begin{figure*}[htbp]
		\centering
		\includegraphics[width=0.97\linewidth]{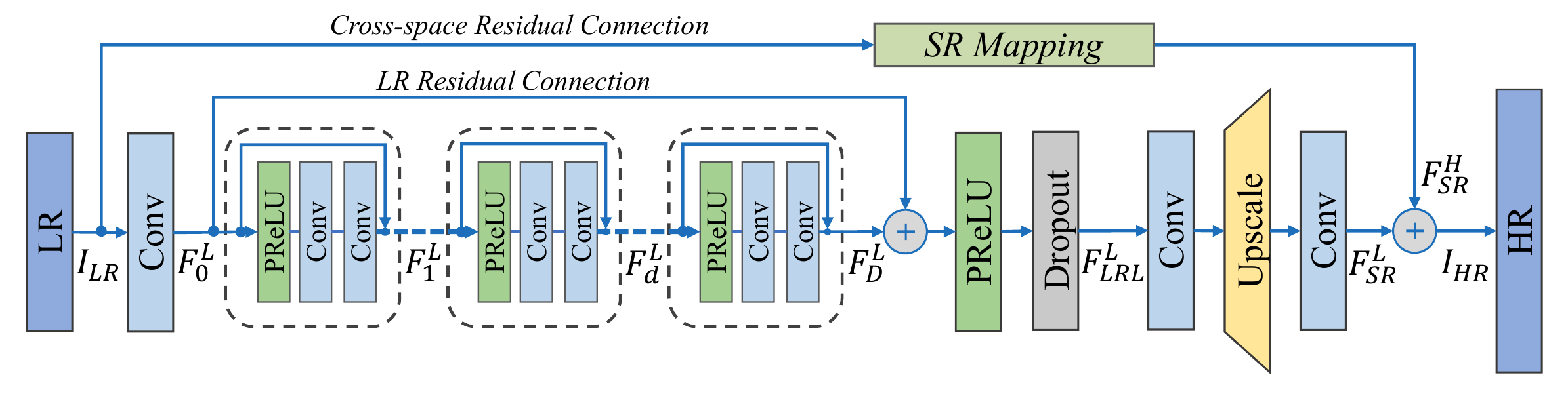}
		\caption{The architecture of our proposed fast spatio-temporal residual network (FSTRN).}
		\label{fig:SR_architecture}
	\end{figure*}
	
	\section{Related work}
	
	\subsection{Single-image SR with CNNs}
	
	In recent years, convolutional neural networks (CNNs) have achieved significant success in many computer vision tasks \cite{DBLP:conf/cvpr/HeZRS16,DBLP:conf/nips/KrizhevskySH12,lecun1998gradient,DBLP:journals/corr/SimonyanZ14a,DBLP:conf/cvpr/SzegedyLJSRAEVR15}, including the super-resolution (SR) problem. Dong \etal pioneered a three layer deep fully convolutional network known as the super-resolution convolutional neural network (SRCNN) to learn the nonlinear mapping between LR and HR images in the end-to-end manner \cite{DBLP:conf/eccv/DongLHT14,DBLP:journals/pami/DongLHT16}. Since then, many research has been presented, which are usually based on deeper network and more advanced techniques. 
	
	As the network deepens, residual connections have been a promising approach to relieve the optimization difficulty for deep neural networks \cite{DBLP:conf/cvpr/HeZRS16}. Combining residual learning, Kim \etal propose a very deep convolutional network \cite{DBLP:conf/cvpr/KimLL16a} and a deeply-recursive convolutional network (DRCN) \cite{DBLP:conf/cvpr/KimLL16}. These two models significantly boost the performance, which demonstrate the potentials of the residual learning in the SR task. Tai \etal present a deep recursive residual network (DRRN) with recursive blocks and a deep densely connected network with memory blocks \cite{DBLP:conf/iccv/TaiYLX17}, which further demonstrates the superior performance of residual learning.
	
	All the above methods work on interpolated upscaled input images. However, directly feeding interpolated images into neural networks can result in a significantly high computational complexity. To address this problem, an efficient sub-pixel convolutional layer \cite{DBLP:conf/cvpr/ShiCHTABRW16} and transposed convolutional layer \cite{DBLP:conf/eccv/DongLT16} are proposed in order to upscale the feature maps to a fine resolution at the end of the network.
	
	Other methods employing residual connections include EDSR \cite{DBLP:conf/cvpr/LimSKNL17}, SRResNet \cite{DBLP:conf/cvpr/LedigTHCCAATTWS17}, SRDenseNet \cite{DBLP:conf/iccv/0001LLG17} to RDN \cite{DBLP:conf/icip/XuCSD18}. However, residual connections are limited within the LR space. These residuals can enhance the performance of feature extraction but would put a excessively heavy load on the up-scaling and fusion parts of the network.

	\subsection{Video SR with CNNs}
	
	Based on image SR methods and further to grasp the temporal consistency, most existing methods employ a sliding frames window \cite{DBLP:conf/cvpr/CaballeroLAATWS17,jo2018deep,DBLP:journals/tci/KappelerYDK16,DBLP:conf/iccv/LiaoTLMJ15,DBLP:conf/iccv/TaoGLWJ17}. To handle spatio-temporal information simultaneously, existing methods usually utilize temporal fusion techniques, such as motion compensation \cite{DBLP:conf/cvpr/CaballeroLAATWS17,DBLP:journals/tci/KappelerYDK16,DBLP:conf/iccv/LiaoTLMJ15,DBLP:conf/iccv/TaoGLWJ17}, bidirectional recurrent convolutional networks (BRCN) \cite{DBLP:conf/nips/HuangWW15}, long short-term memory networks (LSTM) \cite{DBLP:conf/aaai/GuoC17}. Sajjadi \etal use a different way by using a frame-recurrent approach where the previous estimated SR frames are also redirected into the network, which encourages more temporally consistent results \cite{sajjadi2018frame}.
	
	A more natural approach to learn spatio-temporal information is to employ 3D convolutions (C3D), 
	which has shown superior performances in video learning \cite{DBLP:journals/pami/JiXYY13,DBLP:conf/iccv/TranBFTP15,DBLP:journals/corr/abs-1711-11248}. Caballero \etal \cite{DBLP:conf/cvpr/CaballeroLAATWS17} mentioned the slow fusion can also be seen as C3D. In addition, Huang \etal \cite{DBLP:journals/pami/HuangWW18} improved BRCN using C3D, allowing the model to flexibly obtain access to varying temporal contexts in a natural way, but the network is still shallow. In this work, we aimed to build a deep end-to-end video SR network with C3D and maintain high efficiency of computational complexity.

	\section{Fast spatio-temporal residual network}
	
	\subsection{Network structure}
	
	In this section, we describe the structure details of the proposed fast spatio-temporal residual network (FSTRN). As shown in Fig. \ref{fig:SR_architecture}, FSTRN mainly consists of four parts: LR video shallow feature extraction net (LFENet), fast spatio-temporal residual blocks (FRBs), LR feature fusion and up-sampling SR net (LSRNet), and global residual learning (GRL) part composing by LR space residual learning (LRL) and cross-space residual learning (CRL).
	
	\textbf{LFENet} simply uses a C3D layer to extract features from the LR videos. Let's denote the input and output of the FSTRN as $ I_{LR} $ and $ I_{SR}$ and the target output $ I_{HR} $, the LFENet can be represented as:
	\begin{equation}
	F_0^L = {H_{LFE}}\left({I_{LR}}\right),
	\label{eq:LFE}
	\end{equation}
	where $ F_0^L $ is the output of extracted feature-maps, and $ {H_{LFE}\left(\cdot\right)} $ denotes C3D operation in the LFENet. $ F_0^L $ is then used for later LR space global residual learning and also used as input to FRBs for further feature extraction. 
	
	\textbf{FRBs} are used to extract spatio-temporal features on the LFENet output. Assuming that $D$ of FRBs are used, the first FRB performs on the LFENet output, and the subsequent FRB further extract features on the previous FRB output, so the output $ F_d^L $ of the $d$-th FRB can be expressed as:
	\begin{equation}
	\begin{aligned}
	F_d^L & = H_{{FRB},d}\left(F_{d-1}^L\right)\\
	& = H_{{FRB},d}\left(H_{{FRB},d-1}\left(\cdots\left(H_{{FRB},1}\left(F_{0}^L\right)\right)\cdots\right)\right),
	\end{aligned}
	\label{eq:FRBd}
	\end{equation}
	where $ H_{{FRB},d} $ denotes the operations of the $d$-th FRB, more details about the FRB will be shown in Section \ref{sec:FRB}. 
	
	Along with the FRBs, LR space residual learning (LRL) is conducted to further improve feature learning in LR space. LRL makes fully use of feature from the preceding layers and can be obtained by
	\begin{equation}
	F_{LRL}^L = H_{LRL}\left(F_D^L, F_0^L\right),
	\label{eq:LRL_L}
	\end{equation}
	where $ F_{LRL}^L $ is the output feature-maps of LRL by utilizing a composite function $ H_{LRL} $. More details will be presented in Section \ref{sec:GRL}.
	
	\textbf{LSRNet} is applied to obtain super-resolved video in HR space after the efficient feature extraction of LRL. Specifically, we use a C3D for feature fusion followed by a deconvolution \cite{DBLP:journals/corr/DumoulinV16} for upscaling and again a C3D for feature-map channels tuning in the LSRNet. The output $ F_{SR}^L $ can be formulated as:
	\begin{equation}
	F_{SR}^L = H_{LSR}\left(F_{LRL}^L\right),
	\label{eq:LSR}
	\end{equation}
	where $ H_{LSR}\left(\cdot\right) $ denotes the operations of LSRNet.
	
	At last, the network output is composed of the $ F_{SR}^L $ from the LSRNet and an additional LR to HR space global residual, forming a cross-space residual learning (CRL) in HR space. The detail of the CRL is also given in Section \ref{sec:GRL}. So denote a SR mapping of input from LR space to HR space be $ F_{SR}^H $, the output of FSTRN can be obtained as
	\begin{equation}
	I_{SR} = H_{FSTRN}\left(I_{LR}\right) = F_{SR}^L + F_{SR}^H,
	\label{eq:FSTRN}
	\end{equation}
	where $ H_{FSTRN} $ represents the function of the proposed FSTRN method.
	
	\subsection{Fast spatio-temporal residual blocks}
	\label{sec:FRB}
	
	Now we present details about the proposed fast spatio-temporal residual block (FRB), which is shown in Fig. \ref{Res_block}.
	
	Residual blocks have been proven to show excellent performances in computer vision, especially in the low-level to high-level tasks \cite{DBLP:conf/cvpr/KimLL16a,DBLP:conf/cvpr/LedigTHCCAATTWS17}. 
	Lim \etal \cite{DBLP:conf/cvpr/LimSKNL17} proposed a modified residual block by removing the batch normalization layers from the residual block in SRResNet, as shown in Figure \ref{fig:Res_block_EDSR}, which showed a great improvement in single-image SR tasks. To apply residual blocks to multi-frame SR, we simply reserve only one convolutional layer, but inflate the 2D filter to 3D, which is similar to \cite{DBLP:journals/pami/JiXYY13}. As shown in Figure \ref{fig:Res_block_Single_3D}, the $k\times k$ square filter is expanded into a $k\times k\times k$ cubic filter, endowing the residual block with an additional temporal dimension.
	
	After the inflation, the ensuing problems are obvious, in that it takes much more parameters than 2D convolution, accompanied by more computations. To solve this, we propose a novel fast spatio-temporal residual block (FRB) by factorizing the C3D on the above single 3D residual block into two step spatio-temporal C3Ds, \ie, we replace the inflated $k\times k\times k$ cubic filter with a $1\times k\times k$ filter followed by a $k\times 1\times 1$ filter, which has been proven to perform better, in both training and test loss \cite{DBLP:journals/corr/abs-1711-11248,DBLP:journals/corr/abs-1712-04851}, as shown in Figure \ref{fig:Res_block_Proposed}. Also, we change the rectified linear unit (ReLU) \cite{DBLP:journals/jmlr/GlorotBB11} to its variant PReLU, in which the slopes of the negative part are learned from the data rather than predefined \cite{DBLP:conf/iccv/HeZRS15}. So the FRB can be formulated as:
	\begin{equation}
	F_{d}^L = F_{d-1}^{L} + W_{d,t}\left(W_{d,s}\left(\sigma\left(F_{d-1}^{L}\right)\right)\right),
	\label{eq:FRB}
	\end{equation}
	where $ \sigma $ denoted the PReLU \cite{DBLP:conf/iccv/HeZRS15} activation function. $ W_{d,s} $ and $ W_{d,t} $ correspond to weights of the spatial convolution and the temporal convolution in FRB, respectively, where the bias term is not shown.
	In this way, the computational cost can be greatly reduced, which will be shown in Section \ref{sec:FRB_exp}. Consequently, we can build a larger, C3D-based model to directly video SR under limited computing resources with better performance.
	
	\subsection{Global residual learning}
	\label{sec:GRL}
	
	In this section, we describe the proposed global residual learning (GRL) on both LR and HR space. For SR tasks, input and output are highly correlated, so the residual connection between the input and output is wildly employed. However, previous works either perform residual learning on amplified inputs, which would lead to high computational costs, or perform residual connection directly on the input-output LR space, followed by upscaling layers for feature fusion and upsamping, which puts a lot of pressure on these layers.
	
	To address these problems, we come up with global residual learning (GRL) on both LR and HR space, which mainly consists of two parts: LR space residual learning (LRL) and cross-space residual learning (CRL).
	
	\textbf{LR space residual learning} (LRL) is introduced along with the FRBs in LR space. We apply a residual connection with a followed parametric rectified linear unit (PReLU) \cite{DBLP:conf/iccv/HeZRS15} for it. Considering the high similarities between input frames, we also introduced a dropout \cite{DBLP:journals/jmlr/SrivastavaHKSS14} layer to enhance the generalization ability of the network. So the output $ F_{LRL}^L $ of LRL can be obtained by:
	\begin{equation}
	F_{LRL}^L = H_{LRL}\left(F_D^L, F_0^L\right) = \sigma_L\left(F_D^L+F_0^L\right),
	\label{eq:LRL}
	\end{equation}
	where $ \sigma_L $ denoted the combination function of PReLU activation and dropout layer.
	
	\textbf{Cross-space residual learning} (CRL) uses a simple SR mapping to directly map the LR video to HR space, and then adds to the LSRNet result $ F^L_{SR} $, forming a global residual learning in HR space. Specifically, CRL introduces a interpolated LR to the output, which can greatly alleviate the burden on the LSRNet, helping improve the SR results. The LR mapping to HR space can be represented as:
	\begin{equation}
	F_{SR}^H = H_{CRL}\left(I_{LR}\right),
	\label{eq:CRL}
	\end{equation}
	where $ F_{SR}^H $ is a super-resolved input mapping on HR space. $ H_{CRL} $ denotes the operations of the mapping function. The mapping function is selected to be as simple as possible so as not to introduce too much additional computational cost, including bilinear, nearest, bicubic, area, and deconvolution based interpolations.
	
	The effectiveness of GRL and the selection of SR mapping method is demonstrated in Section \ref{sec:Ablation_exp}. 
	
	\subsection{Network learning}
	\label{sec:Network_Learning}
	
	In training, we use $l_1$ loss function for training. To deal with the $l_1$ norm, we use the Charbonnier penalty function $ \rho \left( x \right)=\sqrt{{{x}^{2}}+{{\varepsilon }^{2}}} $ for the approximation. 
	
	Let $\theta $ be the parameters of network to be optimized, $ I_{SR} $ be the network outputs. Then the objective function is defined as:
	\begin{equation}
	\label{defObj}
	\mathcal{L}\left( I_{SR}, I_{HR};\theta \right)=\frac{1}{N}\sum\limits_{n=1}^{N}{\rho \left( {I}_{HR}^{ n }-{I}_{SR}^{ n } \right)}
	\end{equation} 
	where $N$ is the batch size of each training. Here we empirically set $\varepsilon =1e-3$. Note that although the network produces the same frames as the input, we focus on the reconstruction of the center frame from the input frames in this work. As a result, our loss function is mainly related to the center frame of the input frames.

	\section{Theoretical analysis}
	
	In learning theory, we usually use generalization error to express the generalization capability of an algorithm, which is defined as the difference between the expected risk $\mathcal R$ and the empirical risk $\hat{\mathcal R}$ of the algorithm. In this section, we study the generalization ability of FSTRN. Specifically, we first give an upper bound for the covering number $\mathcal N(\mathcal H)$ (covering bound) of the hypothesis space $\mathcal H$ induced by FSTRN. This covering bound constrain the complexity of FSTRN. Then we obtain an $O\left( \sqrt{\frac{1}{n}} \right)$ upper bound for the generalization error (generalization bound) of FSTRN. This generalization bound gives a theoretical guarantee to our proposed algorithms. 
	
	As Fig. \ref{fig:Res_block_Proposed} shows, FRB is obtained by adding an identity mapping to a chain-like neural network with one PReLU and two convolutional layers. Bartlett \etal proves that most standard nonlinearities are Lipschitz-continuous (including PReLU) \cite{bartlett2017spectrally}. Suppose the affine transformations introduced by the two convolutional operators can be respectively expressed by weight matrices $A_{1}^{i}$ and $A_{2}^{i}$. Expect all FRBs, from the input end of the stem to the output end, there are $1$ convolutional layer, $1$ PReLU, $1$ upscale, and $1$ convolutional layer (we don't consider dropout here). They can be respectively expressed by weight matrix $A_{1}$, nonlinearity $\sigma_{1}$, weight matrix $A_{2}$, and weight matrix $A_{3}$. As Fig. \ref{fig:SR_architecture} shows, LR residual learning is an identity mapping and HR residual learning can be expressed by a weight matrix $A_{HR}$. We can further obtain an upper bound for the hypothesis space induced by FSTRN as follows.

	\begin{theorem}[Covering bound for FSTRN]
		\label{thmCovBoundFSTRN}
		For the $i$-th FRB (i = 1, \ldots, D), suppose the Lipschitz constant of the PReLU is $\rho^{i}$, and the spectral norm of the weight matrices are bounded: $\|A_{1}^{i}\|_{\sigma} \le s_{1}^{i}$ and $\|A_{2}^{i}\|_{\sigma} \le s_{2}^{i}$. Also, suppose there are two reference matrices $M_{1}^{i}$ and $M_{2}^{i}$ respectively for $A_{1}^{i}$ and $A_{2}^{i}$, which are satisfied that $\| A_{i}^{i} - M_{i}^{i} \|_{\sigma} \le b_{i}^{i}$, $i = 1, 2$. Similarly, suppose the spectral norm of weight matrices $A_{1}$, $A_{2}$, $A_{3}$, and $A_{HR}$ are respectively upper bounded by $s_{1}$, $s_{2}$, $s_{3}$, and $s_{HR}$. Also, there are $4$ corresponding reference matrices $M_{i}$, $i \in \{ 1, 2, 3, HR \}$ such that $\| A_{i} - M_{i} \| \le b_{i}$. Meanwhile, suppose the Lipschitz constant of nonlinearity $\sigma_{1}$ is $\rho_{1}$. Then, the $\varepsilon$-covering number satisfies that 
		
		\begin{align}
		\label{eqCovBoundFRB}
		\mathcal N(\mathcal H) \le & \frac{b_{1}^{2} \| X \|^{2}_{2}\bar\alpha}{\varepsilon^{2}} \log\left(2W^{2}\right) + \sum_{d = 1}^{D} \mathcal N_{FRB}(d) \nonumber\\
		& + (*) \frac{b_{2}^{2}}{\varepsilon_{2}^{2}} \log\left(2W^{2}\right) \left[\left(\frac{b_{2}}{\varepsilon_{2}}\right)^{2} + \left(\frac{s_{2}b_{3}}{\varepsilon_{3}}\right)^{2} \right] \nonumber\\
		& + \frac{b_{HR}^{2} \| X \|^{2}_{2}}{\varepsilon^{2}}\log\left(2W^{2}\right),
		\end{align}
		
		where
		
		\begin{align}
		\mathcal N_{FRB}(d) = & \left(\frac{\| X \|_{2}s_{1}\rho^{d}}{\varepsilon^{d}}\right)^{2} \prod_{i = 1}^{d} \left[ \left( \rho^{i} s_{1}^{i} s_{2}^{i} \right)^{2} + 1 \right] \nonumber\\
		& \left[ \left(b_{1}^{d}\right)^{2} \left(1 + s_{2}^{d}\right)^{2} + \left(b_{2}^{d}s_{1}^{d} \right)^{2} \right],
		\end{align}
		\begin{equation}
		(*) = \left(\| X \|_{2} s_{1} \rho_{1} \right)^{2} \prod_{d = 1}^{D} \left[ \left( \rho^{d} s_{1}^{d} s_{2}^{d} \right)^{2} + 1 \right],
		\end{equation}
		\begin{equation}
		\varepsilon^{d} = \frac{\varepsilon - s_{HR} - 1}{\bar\alpha} \prod_{i=1}^{d} \left[ \rho^{i} (1 + s_{1}^{i}) (1 + s_{2}^{i}) + 1 \right],
		\end{equation}
		\begin{align}
		\varepsilon_2 = & \frac{\varepsilon - s_{HR} - 1}{\bar\alpha} \left\{ \prod_{i=1}^{D} \left[ \rho^{i} (1 + s_{1}^{i}) (1 + s_{2}^{i}) + 1 \right] + 1 \right\} \nonumber\\
		& \rho_{1} (1 + s_{2}) + s_{HR} + 1,
		\end{align}
		and
		\begin{align}
		\bar\alpha = \left\{ \prod_{j = 1}^{D} \left[ \rho^{j} (1 + s_{1}^{j}) (1 + s_{2}^{j}) + 1 \right] \right\} \rho_{1} (1 + s_{2}),
		\end{align}
		
	\end{theorem}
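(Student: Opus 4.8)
The plan is to follow the spectrally-normalized covering-number framework of Bartlett \etal~\cite{bartlett2017spectrally} and to bound $\mathcal N(\mathcal H)$ by induction along the computational graph of FSTRN. The single primitive I would rely on is their matrix-covering lemma: for a fixed data matrix $X$ and an affine layer with weight $A$ satisfying $\|A\|_\sigma \le s$ and $\|A - M\|_\sigma \le b$ against a reference $M$, the image set $\{XA\}$ admits an $\varepsilon$-cover with $\log \mathcal N \le \frac{b^2\|X\|_2^2}{\varepsilon^2}\log(2W^2)$, where $W$ is the layer width. Every term in the statement is an instance of this lemma evaluated at an appropriately propagated input norm and covering radius. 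Compositions are handled by the standard fact that an $L$-Lipschitz map sends an $\varepsilon$-cover of its domain to an $L\varepsilon$-cover of its range; sums and residual branches are handled by covering each branch separately and taking the union of the resulting centers.

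First I would treat a single FRB in isolation. Since Eq.~\eqref{eq:FRB} writes the block as $F_{d}^L = F_{d-1}^{L} + W_{d,t}(W_{d,s}(\sigma(F_{d-1}^{L})))$, the block map is the identity plus a two-layer branch, so its Lipschitz constant is at most $1 + \rho^{d} s_{1}^{d} s_{2}^{d}$, the summand $1$ coming from the skip connection; the squared form $(\rho^{d} s_{1}^{d} s_{2}^{d})^{2} + 1$ is exactly what appears inside the product of $\mathcal N_{FRB}(d)$. To cover the branch I would cover $A_{1}^{d}$ and $A_{2}^{d}$ separately with the matrix lemma, each at its own radius, and recombine them through Lipschitz composition. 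Covering $A_{1}^{d}$ and then propagating the perturbation through $W_{d,t}$ together with the skip yields the factor $(b_{1}^{d})^{2}(1+s_{2}^{d})^{2}$, while covering $A_{2}^{d}$ directly yields $(b_{2}^{d}s_{1}^{d})^{2}$; their sum is the bracket in $\mathcal N_{FRB}(d)$. The prefactor $(\|X\|_{2}s_{1}\rho^{d}/\varepsilon^{d})^{2}$ records the input norm carried through the stem's first convolution $A_{1}$ together with the block's own PReLU gain, while the accumulated downstream amplification is collected in the product $\prod_{i=1}^{d}[(\rho^{i}s_{1}^{i}s_{2}^{i})^{2}+1]$.

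Next I would chain the $D$ blocks and append the stem. The cumulative Lipschitz constant through depth $d$ is the product $\prod_{i=1}^{d}[\rho^{i}(1+s_{1}^{i})(1+s_{2}^{i})+1]$, which is precisely why the per-block covering radius $\varepsilon^{d}$ carries this product: the total budget $\varepsilon$ must be distributed so that, after each layer's contribution is amplified by all downstream Lipschitz factors, the individual covers still assemble into a single $\varepsilon$-cover of the whole network. The remaining components are covered in the same manner: the LFENet convolution $A_{1}$ against the data $X$ produces the first term (with the full downstream gain $\bar\alpha$), the fusion, upscale, and tuning layers $A_{2}, A_{3}$ together with the PReLU $\sigma_{1}$ produce the $(*)$-weighted two-layer term, and the cross-space residual $A_{HR}$, which acts directly on $X$ in parallel with the main path, produces the last term. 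Summing the log-covering numbers of these independent covers yields the claimed bound.

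I expect the delicate part to be the bookkeeping of the covering radii rather than any single matrix estimate. Because the LR skip connection is an identity and the HR shortcut $A_{HR}$ shares the input $X$ with the entire stem, the main path and the two residual paths must be covered jointly; the subtraction $\varepsilon - s_{HR} - 1$ appearing inside $\varepsilon^{d}$ and $\varepsilon_{2}$ is exactly the removal of the HR path's Lipschitz contribution $s_{HR}$ and the identity's contribution $1$ from the total budget before the remainder is spread across the multiplicatively growing stack of blocks. Ensuring that these allocations are simultaneously consistent, so that the union of all branch covers is a genuine $\varepsilon$-cover of $H_{FSTRN}$, is where the argument demands the most care, and it is the source of the somewhat intricate forms of $\varepsilon^{d}$, $\varepsilon_{2}$, and $\bar\alpha$.
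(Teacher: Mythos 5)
Your overall route is essentially the paper's own: the paper also takes the matrix-covering lemma of Bartlett et al.\ (Lemma \ref{matrixCover}) as the single primitive, propagates covering radii through layers via Lipschitz constants (eqs.~(\ref{epEq1})--(\ref{epEq3})), bounds each FRB and each stem layer separately, and adds the resulting log-covering numbers (eq.~(\ref{inequalityGeneral})). Your block-level bookkeeping --- covering $A_{1}^{d}$ and $A_{2}^{d}$ separately, the bracket $(b_{1}^{d})^{2}(1+s_{2}^{d})^{2}+(b_{2}^{d}s_{1}^{d})^{2}$, and the cumulative per-block Lipschitz products inside $\varepsilon^{d}$ --- matches the paper's eqs.~(\ref{FRBCovBound}) and (\ref{inequalityNdp1}).

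There is, however, one step that would fail as literally stated: you claim that ``sums and residual branches are handled by covering each branch separately and taking the union of the resulting centers.'' A union of covers handles the \emph{union} of two function classes, not their \emph{sum}. For the class $\{f+g:\ f\in\mathcal F,\ g\in\mathcal G\}$ produced by a residual connection, the cover must be built from all pairwise sums of centers (a Minkowski-sum construction): if $C_{1}$ is an $\varepsilon_{1}$-cover of $\mathcal F$ and $C_{2}$ an $\varepsilon_{2}$-cover of $\mathcal G$, then $\{c_{1}+c_{2}:\ c_{1}\in C_{1},\ c_{2}\in C_{2}\}$ is an $(\varepsilon_{1}+\varepsilon_{2})$-cover of the sum, of cardinality $|C_{1}|\cdot|C_{2}|$. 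Concretely, if $\mathcal F=\mathcal G=\{0,h\}$, the union $C_{1}\cup C_{2}=\{0,h\}$ misses the element $2h$ of the sum class by $\|h\|$, which can be arbitrarily large. This product rule --- covering numbers multiply, hence log-covering numbers add --- is exactly what the paper imports as Lemma \ref{coverBoundGeneral} from \cite{he2019why}, and it is also what your own final accounting (``summing the log-covering numbers'') silently relies on: a union would give only $|C_{1}|+|C_{2}|$ centers, i.e.\ roughly a maximum of logs rather than a sum. The fix is local --- replace the union construction by the pairwise-sum construction, or invoke Lemma \ref{coverBoundGeneral} directly --- after which your radius allocation and the rest of your argument go through as in the paper.
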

	
	A detailed proof is omitted here and given in the appendix based on \cite{bartlett2002rademacher, he2019why}. Finally, we can obtain the following theorem. For the brevity, we denote the right-hand side (RHS) of eq. (\ref{eqCovBoundFRB}) as $\frac{R}{\varepsilon}$.
	
	\begin{theorem}[Generalization Bound for FSTRN]
		\label{generalizartionBoundFSTRN}
		For any real $\delta \in (0, 1)$, with probability at least $1 - \delta$, the following inequality holds for any hypothesis $F_{\theta}$: 
		\begin{align}
		\label{generalizationBoundFSTRN}
		& \mathcal R (F_{\theta})\nonumber\\
		\le & \hat{\mathcal R} (F_{\theta}) + \frac{8}{N^{\frac{3}{2}}} + \frac{36}{N} \sqrt{R} \log N + 3 \sqrt{\frac{\log(2/\delta)}{2N}}.
		\end{align}
	\end{theorem}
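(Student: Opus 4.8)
The plan is to lift the covering bound of Theorem~\ref{thmCovBoundFSTRN} to a generalization bound through the standard two-stage chaining argument of \cite{bartlett2017spectrally, he2019why}: pass from the covering number to the empirical Rademacher complexity by Dudley's entropy integral, and then from the Rademacher complexity to the generalization gap by a bounded-differences (McDiarmid) concentration inequality. Write $\mathcal H$ for the hypothesis space induced by FSTRN. Since the training objective is the Charbonnier penalty $\rho(x)=\sqrt{x^{2}+\varepsilon^{2}}$, which is $1$-Lipschitz and, on the bounded pixel range of the inputs and targets, uniformly bounded, I would first record these two facts, because the contraction step and the concentration step silently depend on them.

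First I would apply the classical Rademacher generalization theorem: for a loss bounded in a fixed interval, with probability at least $1-\delta$ every $F_{\theta}\in\mathcal H$ obeys
\[
\mathcal R(F_{\theta}) \le \hat{\mathcal R}(F_{\theta}) + 2\,\widehat{\mathfrak R}_{N}(\mathcal H) + 3\sqrt{\frac{\log(2/\delta)}{2N}},
\]
where $\widehat{\mathfrak R}_{N}(\mathcal H)$ is the empirical Rademacher complexity. This already supplies the last term $3\sqrt{\log(2/\delta)/(2N)}$ of the statement, and Talagrand's contraction lemma (using the $1$-Lipschitzness of $\rho$) lets me replace the loss-composed class by $\mathcal H$ itself, so that only $\widehat{\mathfrak R}_{N}(\mathcal H)$ remains to be controlled.

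Next I would bound $\widehat{\mathfrak R}_{N}(\mathcal H)$ by Dudley's entropy integral,
\[
\widehat{\mathfrak R}_{N}(\mathcal H) \le \inf_{\alpha>0}\left( \frac{4\alpha}{\sqrt N} + \frac{12}{N}\int_{\alpha}^{\sqrt N}\sqrt{\log\mathcal N(\mathcal H, t)}\,dt \right),
\]
and substitute the covering bound of Theorem~\ref{thmCovBoundFSTRN}, whose right-hand side controls $\log\mathcal N(\mathcal H,t)$ by a quantity of order $R/t^{2}$ (this is the $R$ introduced just before the statement). The integrand then becomes $\sqrt R/t$, so the integral equals $\sqrt R\,\log(\sqrt N/\alpha)$. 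Taking the truncation level $\alpha=1/N$ balances the two contributions and gives $\widehat{\mathfrak R}_{N}(\mathcal H)\le 4/N^{3/2}+(18/N)\sqrt R\,\log N$; multiplying by the factor $2$ from the Rademacher inequality reproduces exactly the remaining terms $8/N^{3/2}$ and $(36/N)\sqrt R\,\log N$, and collecting the three pieces yields the claimed inequality.

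The two cited inequalities are routine to invoke; the step that needs genuine care is the Dudley evaluation. Because the covering bound blows up like $R/t^{2}$ as $t\to 0$, the entropy integral must be truncated, and the advertised rates depend on choosing $\alpha$ (here $\alpha=1/N$) so that the tail term $4\alpha/\sqrt N$ lands at order $N^{-3/2}$ while the logarithmic integral lands at order $N^{-1}\log N$. I would also check at the outset that the loss is bounded on the effective domain, since this is what legitimizes both the contraction and the bounded-differences concentration; here the finite pixel range together with the explicit form of the Charbonnier penalty furnishes the required bound.
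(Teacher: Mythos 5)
Your proposal is correct and follows essentially the same route as the paper's own proof: the standard Rademacher generalization bound (Lemma \ref{GenBound}), Dudley's entropy integral (Lemma \ref{covNumBound}) applied with $\log \mathcal N(\mathcal H,\varepsilon) \lesssim R/\varepsilon^{2}$, the truncation choice $\alpha = 1/N$, and the factor-of-two bookkeeping that yields $8/N^{3/2} + (36/N)\sqrt{R}\log N$. If anything, you are slightly more careful than the paper, which passes silently between $\hat{\mathfrak R}(l \circ \mathcal H)$ and $\hat{\mathfrak R}(\mathcal H)$, a step your explicit appeal to Talagrand contraction via the $1$-Lipschitzness of the Charbonnier penalty justifies.
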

	
	Theorem \ref{generalizartionBoundFSTRN} can be obtained from Theorem \ref{thmCovBoundFSTRN}. A detailed proof is given in the appendix. Eq. (\ref{generalizationBoundFSTRN}) gives an $O\left(1/\sqrt{N}\right)$ generalization bound for our proposed algorithm FSTRN. Another strength of our result is that all factors involved do not explicitly rely on the size of our neural network, which could be extremely large. This strength can prevent the proposed result from meaninglessness. Overall, this result theoretically guarantees the feasibility and generalization ability of our method.

	\section{Experiments}
	
	In this section, we first analyze the contributions of the network and then present the experimental results obtained to demonstrate the effectiveness of the proposed model on benchmark datasets quantitatively and qualitatively. 
	
	\subsection{Settings}
	
	\textbf{Datasets and metrics.} For a fair comparison with existing works, we used 25 YUV format benchmark video sequences as our training sets, which have been previously used in \cite{DBLP:conf/nips/HuangWW15,DBLP:journals/pami/HuangWW18,DBLP:journals/pami/LiuS14,DBLP:journals/tip/ProtterETM09,DBLP:journals/tip/TakedaMPE09}. We tested the proposed model on the benchmark challenging videos same as \cite{DBLP:conf/nips/HuangWW15} with the same settings, including the Dancing, Flag, Fan, Treadmill and Turbine videos, which contain complex motions with severe motion blur and aliasing. Following \cite{DBLP:conf/eccv/DongLHT14,DBLP:conf/iccv/TimofteDG13}, SR was only applied on the luminance channel (the Y channel in YCbCr color space), and performances were evaluated with the peak signal-to-noise ratio (PSNR) and structural similarity (SSIM) on the luminance channel.
	
	\begin{table*}[htbp]
		\centering
		\begin{tabular}{|c|c|c|c|c|c|c|}
			\hline
			\multirow{2}*{Methods}&Dancing&Treadmill&Flag&Fan&Turbine&Average\\
			& PSNR / SSIM & PSNR / SSIM & PSNR / SSIM & PSNR / SSIM & PSNR / SSIM & PSNR / SSIM\\
			\hline
			\hline
			Bicubic&26.78 / 0.83&21.58 / 0.65&26.97 / 0.78&33.42 / 0.93&26.06 / 0.76&27.80 / 0.80\\
			SRCNN\cite{DBLP:conf/eccv/DongLHT14}&27.91 / 0.87&22.61 / 0.73&28.71 / 0.83&34.25 / 0.94&27.84 / 0.81&29.20 / 0.84\\
			SRGAN\cite{DBLP:conf/cvpr/LedigTHCCAATTWS17}&27.11 / 0.84&22.40 / 0.72&28.19 / 0.83&33.48 / 0.93&27.38 / 0.81&28.65 / 0.84\\
			RDN\cite{DBLP:conf/icip/XuCSD18}&27.51 / 0.82&22.69 / 0.72&28.62 / 0.82&34.46 / 0.93&28.10 / 0.82&29.30 / 0.84\\
			BRCN\cite{DBLP:conf/nips/HuangWW15}&28.08 / 0.88&22.67 / 0.74&28.86 / 0.84&34.15 / 0.94&27.63 / 0.82&29.16 / 0.85\\
			VESPCN\cite{DBLP:conf/cvpr/CaballeroLAATWS17}&27.89 / 0.86&22.46 / 0.74&29.01 / 0.85&34.40 / 0.94&28.19 / 0.83&29.40 / 0.85\\
			\textbf{FSTRN}(ours)&\textbf{28.66} / \textbf{0.89}&\textbf{23.06} / \textbf{0.76}&\textbf{29.81} / \textbf{0.88}&\textbf{34.79} / \textbf{0.95}&\textbf{28.57} / \textbf{0.84}&\textbf{29.95} / \textbf{0.87}\\
			\hline
			
		\end{tabular}
		\caption{Comparison of the PSNR and SSIM results for the test video sequences by Bicubic, SRCNN\cite{DBLP:conf/eccv/DongLHT14}, SRGAN\cite{DBLP:conf/cvpr/LedigTHCCAATTWS17}, RDN\cite{DBLP:conf/icip/XuCSD18}, BRCN\cite{DBLP:conf/nips/HuangWW15}, VESPCN\cite{DBLP:conf/cvpr/CaballeroLAATWS17}, and our FSTRN with scale factor $4$.}
		\label{tab:psnr_ssim}
	\end{table*}
	
	\begin{figure}[tbp]
		\subfloat[]{
				\includegraphics[width=0.48\linewidth]{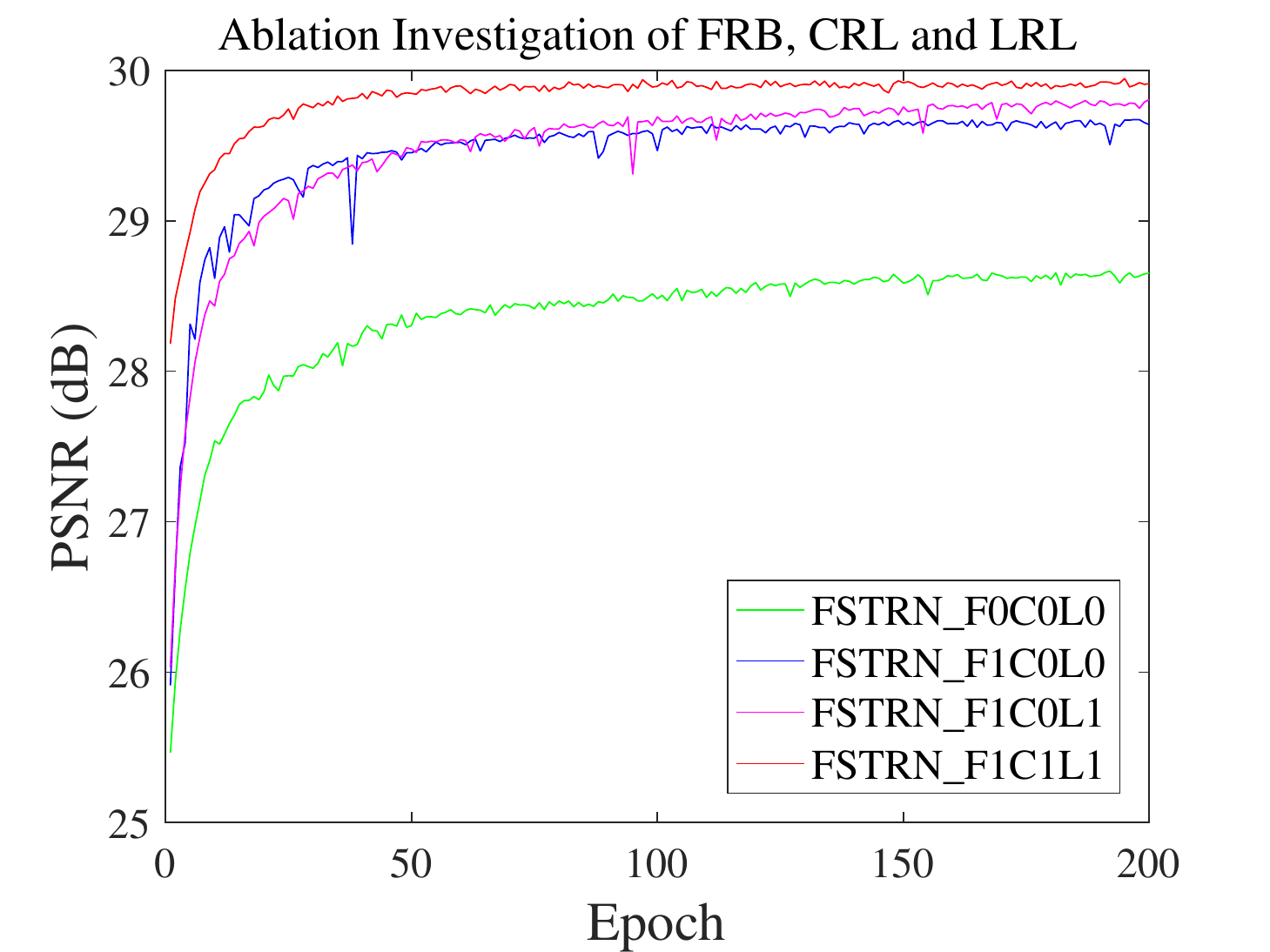} 
				\label{fig:ablation_module}
		}
		\subfloat[]{
				\includegraphics[width=0.48\linewidth]{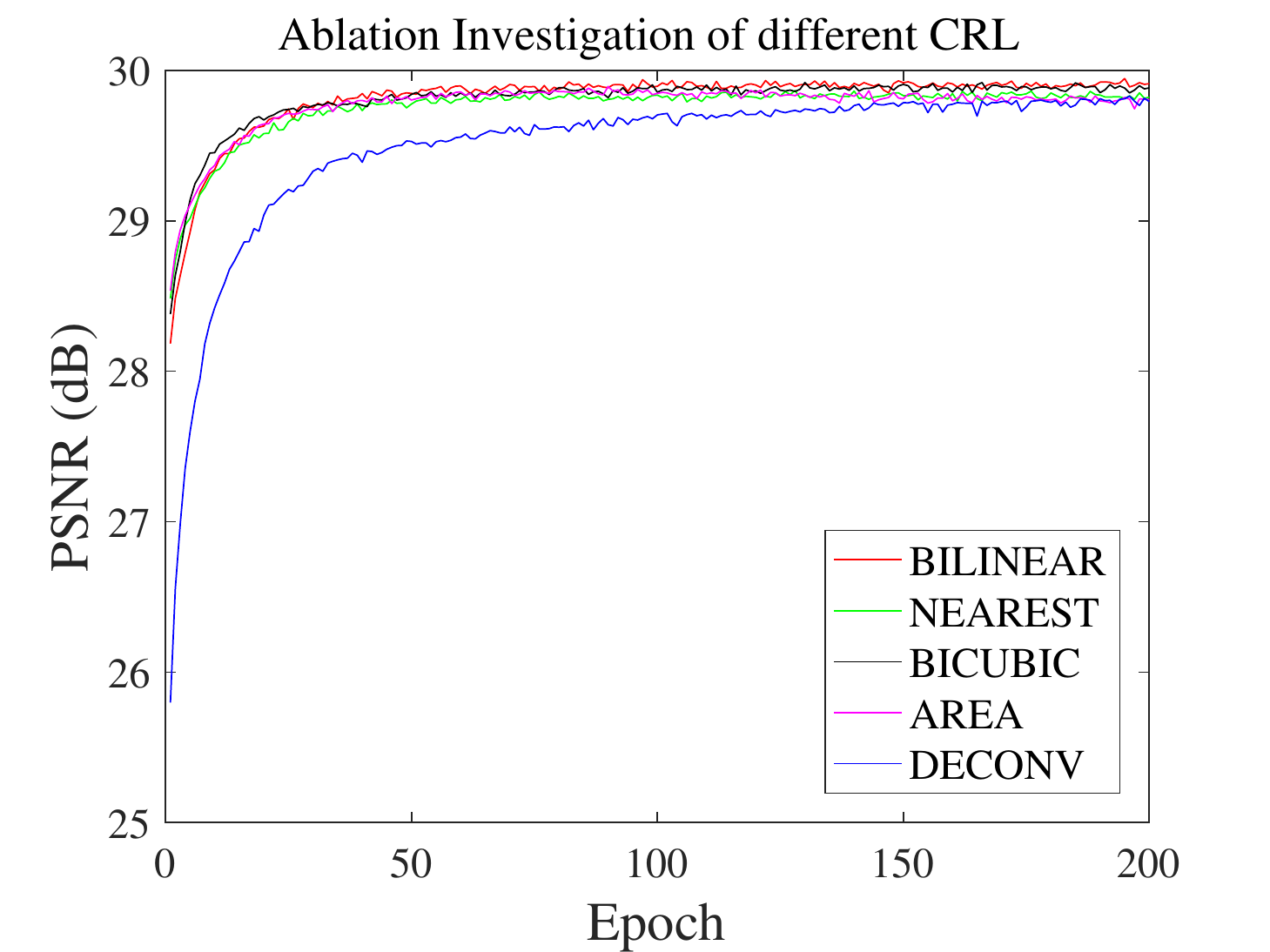}
				\label{fig:upscale_compare}
		}
		\centering 
		\caption{Convergence analysis on different degradation models \protect\subref{fig:ablation_module} and different interpolation method for CRL \protect\subref{fig:upscale_compare}. The curves for each combination are based on the PSNR on test video with scaling factor $\times 4$ in 200 epochs.}
		\label{fig:Convergence}
	\end{figure}
	
	\textbf{Training settings.} Data augmentation was performed on the 25 YUV video sequences dataset. Following \cite{DBLP:conf/nips/HuangWW15,DBLP:journals/pami/HuangWW18}, to enlarge the training set, we trained the model in a volume-based way by cropping multiple overlapping volumes from the training videos. During the cropping, we took a large spatial size as $144\times144$ and the temporal step as $5$, and the spatial and temporal strides were set as 32 and 10, respectively. Furthermore, inspired by \cite{DBLP:conf/cvpr/TimofteRG16}, the flipped and transposed versions of the training volumes were considered. Specifically, we rotated the original images by $90^{\circ}$ and flipped them horizontally and vertically. As a result, we could generate 13020 volumes from the original video dataset. After this, both of the training and testing LR inputs generating processes are divided into two stages: smoothing each original frame by a Gaussian filter with a standard deviation of 2, and downsampling the preceding frames using the bicubic method. In addition, to maintain the number of output frames equal to original video in the test stage, frame padding was applied at the test videos head and tail.
	
	In these experiments, we focused on video SR of upscale factor 4, which is usually considered the most challenging and universal case in video SR. The number of FRBs and the dropout rate were empirically set to be $5$ and $0.3$. The Adam optimizer \cite{DBLP:journals/corr/KingmaB14} was used to minimize the loss function with standard back-propagation. We started with a step size of $1e-4$ and then reduced it by a factor of 10 when the training loss stopped going down. The batch size was set depending on the GPU memory size.

	\subsection{Study of FRB}
	\label{sec:FRB_exp}
	
	In this section, we investigate the effect of the proposed FRB on efficiency. We analyze the computational efficiency of the FRB compared to the residual block built directly using C3D (C3DRB). Supposing we have all input and output feature-map size of $ 64 $, each input consists $ 5 $ frames with the size $ 32\times32 $, then a detail params and floating-point operations (FLOPs) comparison of the proposed FRB and the C3DRB are summarized in Table \ref{tab:FRB_exp}. It's obvious to see that the FRB can greatly reduce parameters and calculations by more than half amount. In this way, the computational cost can be greatly reduced, so we can build a larger, C3D-based model to directly video SR under limited computing resources with better performance.
	
	\begin{table}[h]
		\renewcommand{\arraystretch}{1.3}
		\centering
		\begin{tabular}{c|c|c}
			\hline
			Blocks&\#Params&\#FLOPs\\
			\hline
			\hline
			C3DRB& $\sim 111$K & $\sim 566$M \\
			
			FRB& $\sim 49$K & $\sim 252$M \\
			\hline
			Reduce ratio& 55.86\% & 55.48\% \\
			\hline
		\end{tabular}
		\caption{\#Params and \#FLOPs comparisons of one residual block using single C3D (Fig. \ref{fig:Res_block_Single_3D}) and one FRB (Fig. \ref{fig:Res_block_Proposed}).}
		\label{tab:FRB_exp}
	\end{table}
	
	\subsection{Ablation investigations}
	\label{sec:Ablation_exp}
	
	We conducted ablation investigation to analyze the contributions of FRBs and GRL with different degradation models in this section. Fig. \ref{fig:ablation_module} shows the convergence curves of the degradation models, including: 1) the baseline obtained without FRB, CRL and LRL (FSTRN\_F0C0L0); 2) baseline integrated with FRBs (FSTRN\_F1C0L0); 3) baseline with FRBs and LRL (FSTRN\_F1C0L1); 4) baseline with all components of FRBs, CRL and LRL (FSTRN\_F1C1L1), which is our FSTRN. The number $D$ of FRBs was set to $5$, and CRL uses bilinear interpolation.
	
	The baseline converges slowly and performs relatively poor (green curve), and the additional FRBs greatly improve the performance (blue curve), which can be due to the efficient inter-frame features capture capabilities. As expected, LRL further improved network performance (magenta curve). Finally, the addition of CRL was applied (red curve), constituted GRL on both LR and HR space. It can be clearly seen that the network performed faster convergence speed and better performance, which demonstrated the effectiveness and superior ability of FRB and GRL.
	
	Furthermore, to show how different interpolation methods in CRL affect the network performance, we investigated different interpolation method for CRL. Specifically, we explored bilinear, nearest, bicubic, area and deconvolution based interpolations. As shown in Fig. \ref{fig:upscale_compare}, different interpolation method except deconvolution behaved almost the same, reason for this is because the deconvolution needs a process to learn the upsampling filers, while other methods do not need. All the different interpolation method converged to almost the same performance, indicated that the performance improvement of FSTRN is attributed to the introduction of GRL, and has little to do with specific interpolation method in CRL.

	\begin{figure*}[htbp]
		\centering
		\subfloat[Original]{
			\begin{minipage}[t]{0.143\linewidth}
				\centering
				\includegraphics[height=4in]{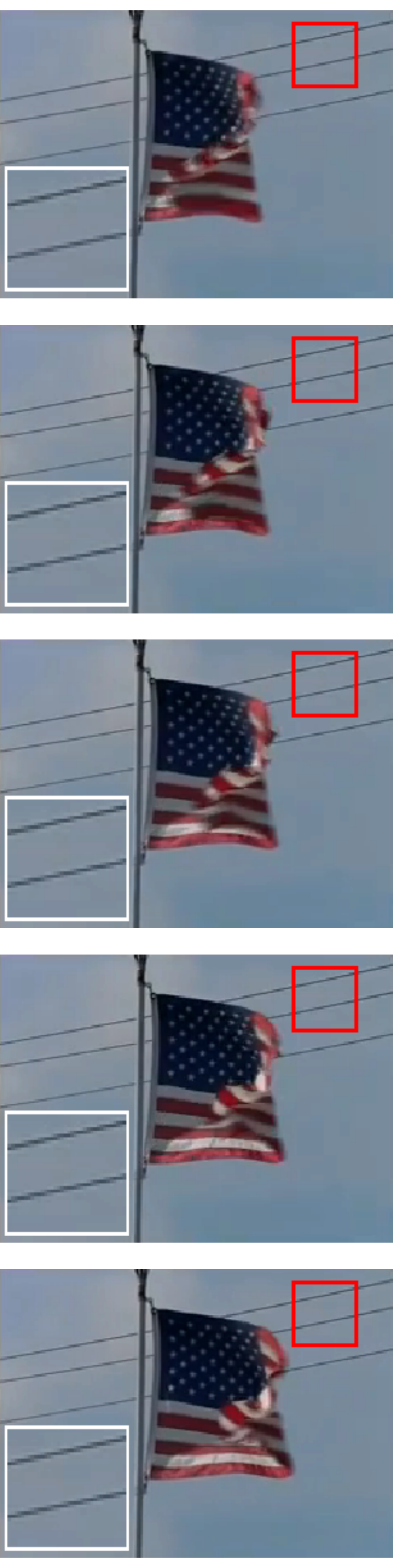} 
				\label{fig:Original}
			\end{minipage}
		}
		\subfloat[SRCNN]{
			\begin{minipage}[t]{0.143\linewidth}
				\centering
				\includegraphics[height=4in]{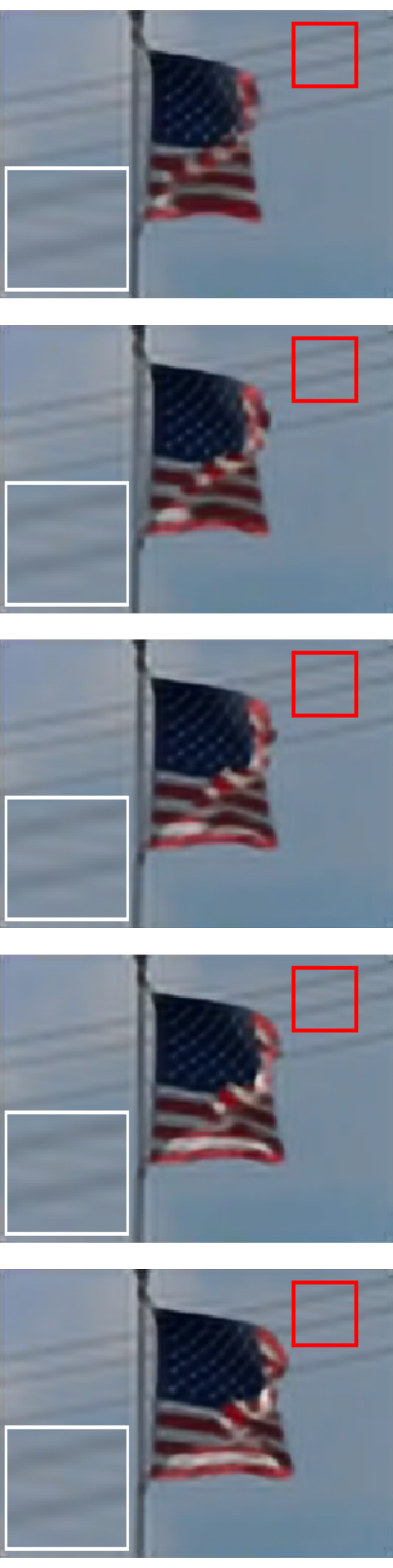}
				\label{fig:SRCNN}
			\end{minipage}
		}
		\subfloat[RDN]{
			\begin{minipage}[t]{0.143\linewidth}
				\centering
				\includegraphics[height=4in]{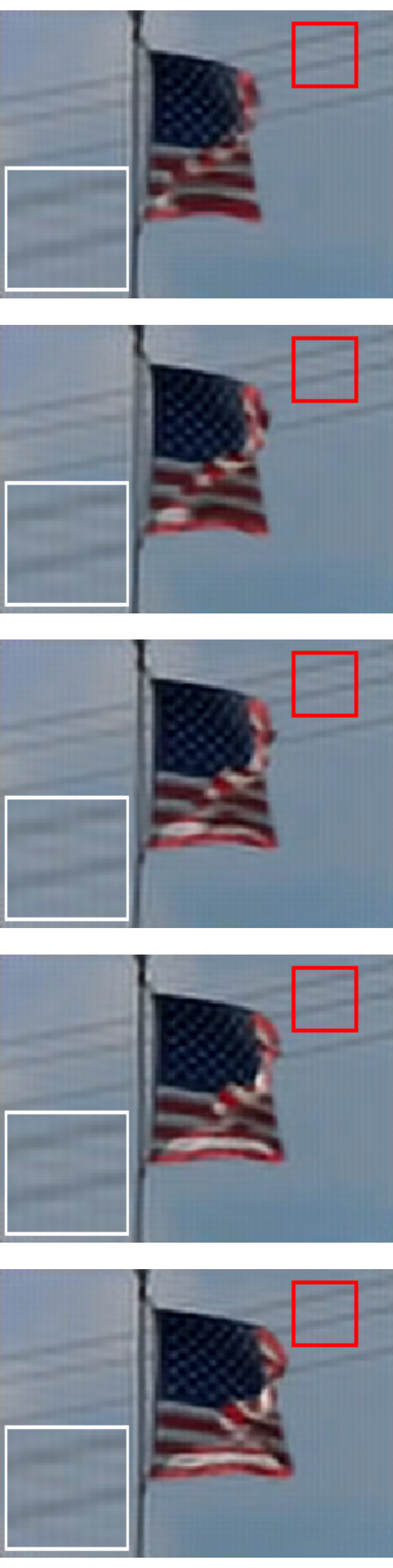}
				\label{fig:RDN}
			\end{minipage}
		}
		\subfloat[BRCN]{
			\begin{minipage}[t]{0.143\linewidth}
				\centering
				\includegraphics[height=4in]{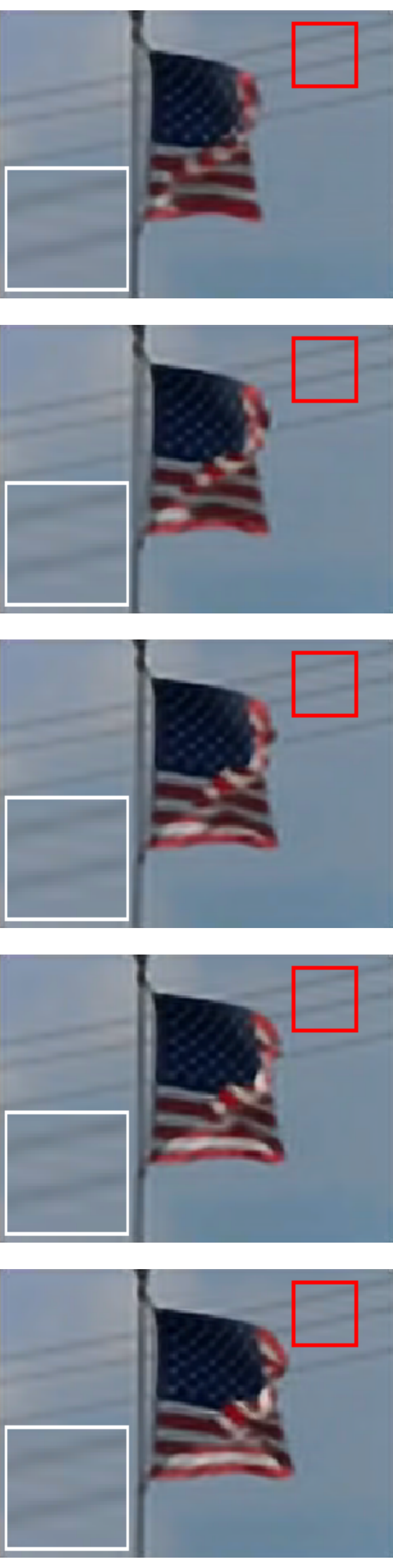}
				\label{fig:BRCN}
			\end{minipage}
		}
		\subfloat[VESPCN]{
			\begin{minipage}[t]{0.143\linewidth}
				\centering
				\includegraphics[height=4in]{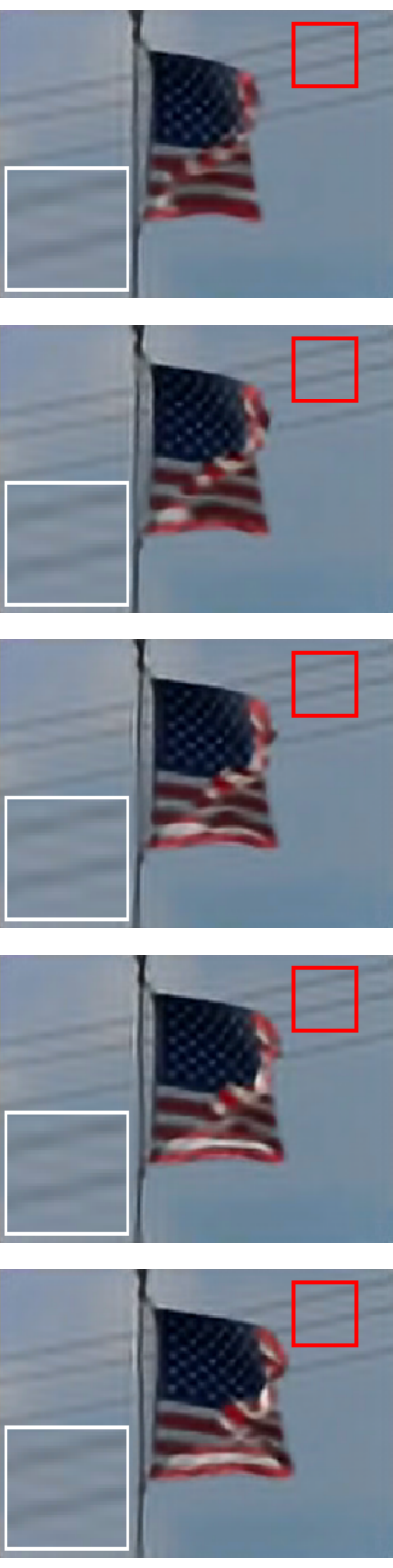}
				\label{fig:VESPCN}
			\end{minipage}
		}
		\subfloat[FSTRN]{
			\begin{minipage}[t]{0.143\linewidth}
				\centering
				\includegraphics[height=4in]{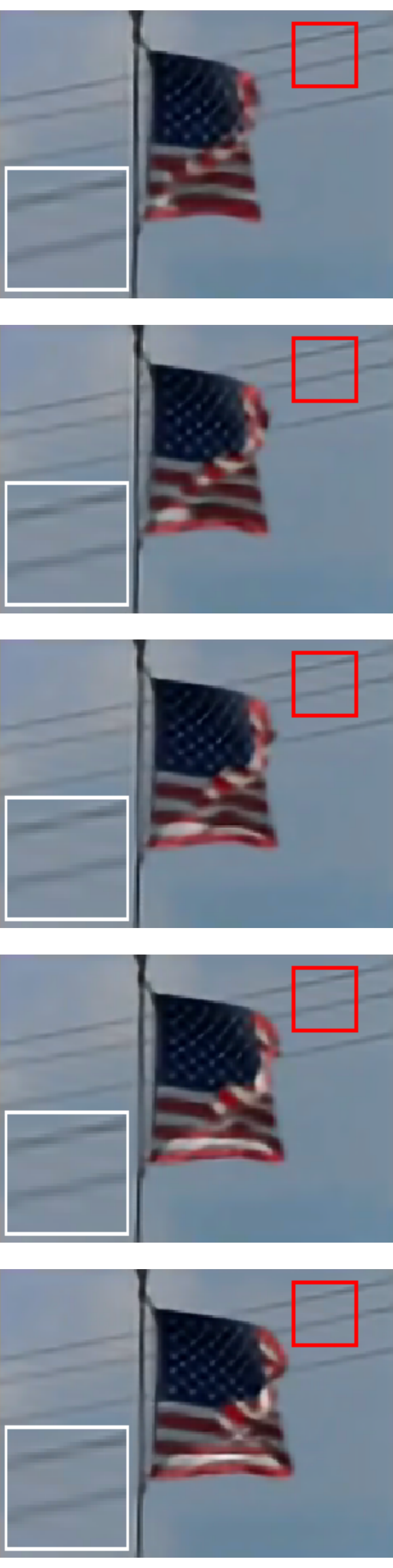}
				\label{fig:FSTRN}
			\end{minipage}
		}
		\centering
		\caption{Comparison between original frames ($1{st}\sim5{th}$ frames, from the top row to bottom) of the Flag video and the SR results obtained by SRCNN, RDN, BRCN, VESPCN and FSTRN, respectively. Our results show sharper outputs with smoother inter-frame transitions compared to other works.}
		\label{multi_frame_comparison}
	\end{figure*}
	
	\subsection{Comparisons with state-of-the-art}
	
	We compared the proposed method with different single-image SR methods and state-of-the-art multi-frame SR methods, both quantitatively and qualitatively, including Bicubic interpolation, SRCNN \cite{DBLP:conf/eccv/DongLHT14,DBLP:journals/pami/DongLHT16}, SRGAN \cite{DBLP:conf/cvpr/LedigTHCCAATTWS17}, RDN \cite{DBLP:conf/icip/XuCSD18}, BRCN \cite{DBLP:conf/nips/HuangWW15,DBLP:journals/pami/HuangWW18} and VESPCN \cite{DBLP:conf/cvpr/CaballeroLAATWS17}. The number $D$ of FRBs was set to $5$ in following comparisons and the upscale method of CRL was set to bilinear interpolation.
	
	The quantitative results of all the methods are summarized in Table \ref{tab:psnr_ssim}, where the evaluation measures are the PSNR and SSIM indices. Specifically, compared with the state-of-the-art SR methods, the proposed FSTRN shows significant improvement, surpassing them 0.55 dB and 0.2 on average PSNR and SSIM respectively.
	
	In addition to the quantitative evaluation, we present some qualitative results in terms of single-frame (in Figure \ref{single_frame_comparison}) and multi-frame (in Figure \ref{multi_frame_comparison}) SR comparisons, showing visual comparisons between the original frames and the $\times4$ SR results. It is easy to see that the proposed FSTRN recovers the finest details and produces most pleasing results, both visually and with regard to the PSNR/SSIM indices. Our results show sharper outputs and even in grid processing, which is recognized as the most difficult to deal in SR, the FSTRN can handle it very well, showing promising performance.
	
	\section{Conclusion}
	
	In this paper, we present a novel fast spatio-temporal residual network (FSTRN) for video SR problem. We also design a new fast spatio-temporal residual block (FRB) to extract spatio-temporal features simultaneously while assuring high computational efficiency.
	Besides the residuals used on the LR space to enhance the feature extraction performance, we further propose a cross-space residual learning to exploit the similarities between the low-resolution (LR) input and the high-resolution (HR) output. Theoretical analysis provides guarantee on the generalization ability, and empirical results validate the strengths of the proposed approach and demonstrate that the proposed network significantly outperforms the current state-of-the-art SR methods. 
	
	\section{Acknowledgements}
	
	This work was supported in part by the National Natural Science Foundation of China under Grants 61822113, 41871243, 41431175, 61771349, the National Key R \& D Program of China under Grant 2018YFA0605501, Australian Research Council Projects FL-170100117, DP-180103424, IH-180100002 and the Natural Science Foundation of Hubei Province under 2018CFA050.
	
	{\small
		\bibliographystyle{ieee}
		\bibliography{fstrnbib}
	}





\newpage

\begin{appendix}
	
	
	
	
	\section{Proof}
	\label{sec:sup_proof}
	
	This appendix collects all the proofs omitted from the main text.
	
	\subsection{Preliminary}
	\label{pre}
	
	This subsection gives the background knowledges necessary to the development of the theoretical analysis.
	
	A tuned FSTRN induces a hypothesis function that maps from low-resolution videos to high-resolution videos. For the brevity, we denote the hypothesis function as
	\begin{align}
	F_{\theta}: ~ & \mathbb R^{n_{LR}} \to \mathbb R^{n_{HR}}, \\
	& I_{LR} \mapsto I_{HR},
	\end{align}
	where $\theta$ is the tuned parameter, and $n_{LR}$ and $n_{LR}$ are respectively the dimensions of the low-resolution space and the high-resolution space. Suppose all the hypothesis functions $F_{\theta}$ computed by FSTRN constitute a hypothesis space $\mathcal H$. To measure the performance of the hypothesis function, we define an object function in the main text as eq. (\ref{defObj}). The corresponding loss function is defined as follows: 
	\begin{align}
	l\left( I_{SR}, I_{HR};\theta \right) = & \rho \left( {I}_{HR}-{I}_{SR} \right) \nonumber\\
	& = \sqrt{{\left({{I}_{HR}-{I}_{SR}}\right)^{2}}+{{\varepsilon }^{2}}},
	\end{align}
	where ${I}_{HR}$ and ${I}_{LR}$ are respectively the output (high-resolution image/video) and input (low-resolution image/video), and $ \rho \left( x \right)=\sqrt{{{x}^{2}}+{{\varepsilon }^{2}}} $ is Charbonnier penalty function. Based on the loss function $l(I_{SR}, I_{HR}, F_{\theta})$, the expected risk, in term of the hypothesis function $F_{\theta}$, is defined as follows:
	\begin{equation}
	\mathcal R (F_{\theta}) = \mathbb E_{I_{SR}, I_{HR}} l(I_{SR}, I_{HR}, F_{\theta}).
	\end{equation}
	Similarly, the empirical risk is defined as
	\begin{equation}
	\hat{\mathcal R} (F_{\theta}) = \mathcal L (F_{\theta}) = \frac{1}{N} \sum_{n=1}^{N} l(I_{SR}^{n}, I_{HR}^{n}, F_{\theta}),
	\end{equation}
	where $I_{SR}^{n}$ and $I_{HR}^{n}$ denote the $n$-th instance in the training set, and $N$ is the sample size, and we redefine the empirical risk as $\hat{\mathcal R}$ in accordance with the convention. Finally, the generalization error of hypothesis function $F(\theta)$ is defined as the difference between the expected risk $\mathcal R (F_{\theta})$ and the corresponding empirical risk 
	$\hat{\mathcal R} (F_{\theta})$.
	
	As the principle of {\it Occam's razor} says, the generalization capability of an algorithm is dependent with the complexity of its corresponding hypothesis space (hypothesis complexity): a complex algorithms tend to have a poor generalization ability. In learning theory, three classic measurements of hypothesis complexity are respectively VC-dimension, Rademacher complexity, and covering number (see, respectively, \cite{bartlett2002rademacher},  \cite{vapnik1974theory}, and \cite{dudley2010sizes}). An classic result in learning theory expresses the negative correlation between the generalization error of an algorithm and the corresponding Rademacher complexity $\hat{\mathfrak R}(\mathcal H)$ as the following lemma.
	\begin{lemma}[cf. \cite{mohri2012foundations}, Theorem 3.1]
		\label{GenBound}
		For any $\delta > 0$, with probability at least $1 - \delta$, the following inequality hold for all $F_{\theta} \in \mathcal H$:
		\begin{equation}
		\mathcal R (F_{\theta}) \le \hat{\mathcal R} (F_{\theta}) + 2 \hat{\mathfrak R}(l \circ \mathcal H) + 3 \sqrt{\frac{\log \frac{2}{\delta}}{2N}},
		\end{equation}
		where $l \circ \mathcal H$ is defined as
		\begin{equation}
		l \circ \mathcal H \triangleq \{ l \circ F: ~ F \in \mathcal H \}.
		\end{equation}
	\end{lemma}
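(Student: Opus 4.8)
The plan is to establish Lemma~\ref{GenBound} by the classical two-step route of bounded-difference concentration (McDiarmid's inequality) followed by a symmetrization argument, which is the standard derivation underlying the cited reference. Throughout I would assume the loss $l$ takes values in $[0,1]$; this holds after the routine normalization of the pixel intensities (so that the Charbonnier penalty is bounded), and it is precisely this range that fixes the bounded-difference constant and hence the numerical factor $3$ and the $\log(2/\delta)$ appearing in the statement.

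First I would introduce the uniform deviation $\Phi(S) = \sup_{F_{\theta} \in \mathcal H}\left(\mathcal R(F_{\theta}) - \hat{\mathcal R}(F_{\theta})\right)$, which by construction upper-bounds $\mathcal R(F_{\theta}) - \hat{\mathcal R}(F_{\theta})$ simultaneously over all of $\mathcal H$. Since each summand of the empirical risk contributes at most $1/N$ to the average, replacing a single training instance $(I_{SR}^{n}, I_{HR}^{n})$ by another alters $\Phi(S)$ by at most $1/N$. McDiarmid's inequality then yields, with probability at least $1 - \delta/2$,
\begin{equation}
\Phi(S) \le \mathbb E_{S}[\Phi(S)] + \sqrt{\frac{\log(2/\delta)}{2N}}. \nonumber
\end{equation}

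The key step is to control $\mathbb E_{S}[\Phi(S)]$ by the expected Rademacher complexity $\mathfrak R(l \circ \mathcal H)$ through symmetrization. I would introduce an independent ghost sample $S'$, rewrite the expected risk as $\mathcal R(F_{\theta}) = \mathbb E_{S'}[\hat{\mathcal R}_{S'}(F_{\theta})]$, pull the supremum inside the expectation using Jensen's inequality, and then insert Rademacher signs $\sigma_{n} \in \{\pm 1\}$ by exploiting the fact that swapping the $n$-th entries of $S$ and $S'$ leaves the joint distribution of the symmetric difference unchanged; this gives $\mathbb E_{S}[\Phi(S)] \le 2\,\mathfrak R(l \circ \mathcal H)$. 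A second application of McDiarmid — observing that the empirical Rademacher complexity $\hat{\mathfrak R}(l \circ \mathcal H)$ itself has bounded differences $1/N$ — converts the expected complexity into its empirical counterpart, so that with probability at least $1 - \delta/2$,
\begin{equation}
\mathfrak R(l \circ \mathcal H) \le \hat{\mathfrak R}(l \circ \mathcal H) + \sqrt{\frac{\log(2/\delta)}{2N}}. \nonumber
\end{equation}

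Finally I would take a union bound over the two failure events (each of probability at most $\delta/2$) and chain the three inequalities: $\mathcal R(F_{\theta}) - \hat{\mathcal R}(F_{\theta}) \le \Phi(S) \le \mathbb E_{S}[\Phi(S)] + \sqrt{\log(2/\delta)/(2N)} \le 2\hat{\mathfrak R}(l \circ \mathcal H) + 3\sqrt{\log(2/\delta)/(2N)}$, holding uniformly over $\mathcal H$, which is exactly the claimed bound. The hard part is the symmetrization step: it is the only place where genuine probabilistic structure (the ghost sample together with the sign-flip invariance) is used rather than mere concentration, and some care is needed to track that the complexity term refers to the loss class $l \circ \mathcal H$ rather than $\mathcal H$ itself — so that the covering-number estimate of Theorem~\ref{thmCovBoundFSTRN} is subsequently applied to the correct object when this lemma is specialized to FSTRN.
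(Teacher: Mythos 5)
Your proof is correct and takes essentially the same route as the paper: the paper does not actually prove this lemma but imports it from Mohri et al.~\cite{mohri2012foundations} (Theorem 3.1), and your McDiarmid-plus-symmetrization argument --- with the two $\delta/2$ applications of McDiarmid jointly producing the factor $3$ and the $\log(2/\delta)$ via the chain $\mathcal R(F_{\theta}) - \hat{\mathcal R}(F_{\theta}) \le \Phi(S) \le 2\,\mathfrak R(l \circ \mathcal H) + \sqrt{\log(2/\delta)/(2N)} \le 2\,\hat{\mathfrak R}(l \circ \mathcal H) + 3\sqrt{\log(2/\delta)/(2N)}$ --- is precisely the standard proof of that cited theorem. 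The one point worth recording is the boundedness hypothesis, which the paper leaves implicit: the lemma requires the loss class $l \circ \mathcal H$ to take values in $[0,1]$, and with $\varepsilon = 10^{-3}$ the Charbonnier penalty on normalized residuals lies in $[\varepsilon, \sqrt{1+\varepsilon^{2}}]$, so strictly one should rescale by that range (or absorb it into the constants) before invoking the bounded-difference constant $1/N$, exactly as your opening normalization remark anticipates.
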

	
	Computing the empirical Rademacher complexity of neural network could be extremely difficult and thus still remains an open problem. Fortunately, the empirical Rademacher complexity can be upper bounded by the corresponding $\varepsilon$-covering number $N(\mathcal H, \varepsilon, \| \cdot \|_{2})$ as the following lemma states.
	\begin{lemma}[cf. \cite{bartlett2017spectrally}, Lemma A.5]
		\label{covNumBound}
		Suppose $\bm 0 \in \mathcal H$ and all conditions in Lemma \ref{GenBound} hold. Then
		\begin{align}
		\label{formulaCovNumBound}
		& \hat{\mathfrak R}(\mathcal H) \nonumber\\
		& \le \inf_{\alpha > 0} \left( \frac{4\alpha}{\sqrt{n}} + \frac{12}{n} \int_{\alpha}^{\sqrt n} \sqrt{\log \mathcal N(l \circ \mathcal H, \varepsilon, \| \cdot \|_{2})} \text{d}\varepsilon \right).
		\end{align}
	\end{lemma}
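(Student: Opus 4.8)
The statement is the Dudley entropy integral bound on the empirical Rademacher complexity, and the natural route is a chaining argument. The plan is to control $\hat{\mathfrak R}(l \circ \mathcal H)$ by approximating each element of $l \circ \mathcal H$ at a geometric hierarchy of resolution scales, bounding the contribution of each scale by a finite maximal inequality, and then passing from the resulting sum to the entropy integral.

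First I would fix scales $\varepsilon_j = 2^{-j}\sqrt{n}$ for $j = 0, 1, 2, \ldots$ and, at each scale, choose a minimal $\varepsilon_j$-cover $C_j$ of $l \circ \mathcal H$ in the $\|\cdot\|_2$ metric, so that $|C_j| = \mathcal N(l\circ\mathcal H, \varepsilon_j, \|\cdot\|_2)$. For each $h \in l \circ \mathcal H$ let $\pi_j(h) \in C_j$ be a nearest cover element, giving $\|h - \pi_j(h)\|_2 \le \varepsilon_j$. The coarsest scale $\sqrt n$ is the natural diameter of the point-evaluation vectors, and the hypothesis $\bm 0 \in \mathcal H$ is what anchors the chain at the origin, so that the $j=0$ term can be absorbed into the leading $\alpha$-piece rather than contributing uncontrollably.

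Next I would invoke the telescoping identity $h = (h - \pi_J(h)) + \sum_{j=1}^{J}(\pi_j(h) - \pi_{j-1}(h)) + \pi_0(h)$ for a truncation depth $J$ with $\varepsilon_J$ of order $\alpha$, then take the supremum over $h \in l\circ\mathcal H$ followed by the Rademacher expectation $\mathbb E_\sigma$. The residual $h - \pi_J(h)$ satisfies $\frac1n \langle \sigma, h-\pi_J(h)\rangle \le \frac{\|\sigma\|_2 \varepsilon_J}{n} = \frac{\varepsilon_J}{\sqrt n}$ by Cauchy–Schwarz, which together with the anchoring term yields the $\frac{4\alpha}{\sqrt n}$ contribution. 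Each chaining link $\pi_j(h) - \pi_{j-1}(h)$ ranges over a finite set of cardinality at most $|C_j|\,|C_{j-1}|$ and has $\ell_2$-norm at most $\varepsilon_j + \varepsilon_{j-1} \le 3\varepsilon_j$ (triangle inequality through $h$), so Massart's finite-class maximal inequality bounds its Rademacher complexity by a term of order $\frac{\varepsilon_j \sqrt{\log |C_j|}}{n}$.

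Finally I would sum the per-link bounds over $j = 1, \ldots, J$ and recognize $\sum_j \varepsilon_j \sqrt{\log \mathcal N(l\circ\mathcal H, \varepsilon_j, \|\cdot\|_2)}$ as a Riemann-type comparison for the entropy integral $\int_\alpha^{\sqrt n}\sqrt{\log \mathcal N(l\circ\mathcal H, \varepsilon, \|\cdot\|_2)}\,\mathrm d\varepsilon$, using $\varepsilon_{j-1} - \varepsilon_j = \varepsilon_j$ together with monotonicity of the covering number in $\varepsilon$; this is where the factor $\frac{12}{n}$ arises. Taking the infimum over the free truncation scale (equivalently over $\alpha$) delivers the stated bound. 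The main obstacle is the bookkeeping in the chaining: keeping the telescoping residual, the Massart constants, and the sum-to-integral comparison mutually consistent so that the clean constants $4$ and $12$ actually emerge, while ensuring the $\bm 0 \in \mathcal H$ hypothesis is used precisely to terminate the chain at the origin.
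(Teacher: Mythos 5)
You cannot be checked against a paper proof here, because the paper does not prove this lemma at all: it is imported verbatim as background knowledge (it is Lemma A.5 of \cite{bartlett2017spectrally}) and then used as a black box in the proof of Theorem \ref{generalizartionBoundFSTRN}. What you have written is, in substance, the standard Dudley chaining proof of that cited result, and your architecture is the right one: geometric scales $\varepsilon_j = 2^{-j}\sqrt n$, minimal covers $C_j$, a telescoping decomposition through nearest cover elements, Cauchy--Schwarz for the truncation residual (choosing $J$ so that $\varepsilon_{J+1} \in (\alpha, 2\alpha]$, hence $\varepsilon_J \le 4\alpha$, which is exactly the $4\alpha/\sqrt n$ term), and Massart's finite-class lemma per link with norm bound $3\varepsilon_j$ and cardinality bound $|C_j|\,|C_{j-1}| \le |C_j|^2$, giving $6\varepsilon_j\sqrt{\log|C_j|}/n$ per link. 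Two reading notes: the class being chained must be $l \circ \mathcal H$ throughout, as you correctly take it to be (the paper's statement is internally inconsistent, writing $\hat{\mathfrak R}(\mathcal H)$ on the left while covering $l \circ \mathcal H$ on the right); and it is the boundedness of the loss inherited from the conditions of Lemma \ref{GenBound}, together with $\bm 0$ lying in the class, that makes $\sqrt n$ a valid coarsest scale with singleton cover --- after which the anchor term vanishes identically, since $\mathbb E_\sigma \langle \sigma, v\rangle = 0$ for any fixed $v$, rather than being ``absorbed'' into the $\alpha$-piece.

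There is one concrete slip in your bookkeeping, precisely at the step you flagged as delicate. You pair the link term $\varepsilon_j\sqrt{\log|C_j|}$ with the interval $[\varepsilon_j, \varepsilon_{j-1}]$, invoking $\varepsilon_{j-1}-\varepsilon_j = \varepsilon_j$ and monotonicity; but on that interval $\mathcal N(\varepsilon) \le \mathcal N(\varepsilon_j) = |C_j|$, so the inequality runs the wrong way --- the integral over $[\varepsilon_j,\varepsilon_{j-1}]$ is \emph{at most} $\varepsilon_j\sqrt{\log|C_j|}$, which is useless for bounding the sum by the integral. The correct pairing is with the interval \emph{below} $\varepsilon_j$: since $\mathcal N(\varepsilon) \ge |C_j|$ for $\varepsilon \le \varepsilon_j$, one has $\varepsilon_j \sqrt{\log |C_j|} = 2\left(\varepsilon_j - \varepsilon_{j+1}\right)\sqrt{\log |C_j|} \le 2\int_{\varepsilon_{j+1}}^{\varepsilon_j} \sqrt{\log \mathcal N(l\circ\mathcal H, \varepsilon, \|\cdot\|_2)}\,\mathrm d\varepsilon$, and summing over $j = 1,\ldots,J$ gives $2\int_{\varepsilon_{J+1}}^{\sqrt n}(\cdots)$ with $\varepsilon_{J+1} > \alpha$. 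This factor of $2$ is exactly what turns the per-link constant $6$ into the stated $12$; with your pairing you would (invalidly) land on $6$. It is a one-line repair, not a change of approach, so your proposal is essentially the proof of the cited lemma modulo this fix.
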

	
	Some recent works study the hypothesis complexity of deep neural networks and provide upper bounds of the corresponding hypothesis spaces. \cite{bartlett2017spectrally} gives a spectrally-normalised covering bound and a generalization bound for all chain-like neural networks. \cite{he2019why} focuses on the deep neural networks with shortcut connections and gives a covering bound and a corresponding generalization bound. Specifically, for a deep neural network with residual connections, suppose the ``stem'' is obtained by discarding all residual connections. Apparently, it is a chain-like neural network and can be expressed by the following formula:
	\begin{equation}
	\label{stemStructure}
	S = (A_{1}, \sigma_{1}, A_{2}, \sigma_{2}, \ldots, A_{L}, \sigma_{L}),
	\end{equation}
	where $A_{i}$, $i = 1, \ldots, L$ are weight matrices and $\sigma_{i}$ are nonlinearities. Meanwhile, we denote all residual connections as $V_{j}$, $j \in J$, where $J$ is the index set. Suppose the output of the $i$-th layer (constituted by the weight matrix $A_{i}$ and the nonlinearity $\sigma_{i}$) is $F_{i}$, and all possible outputs $F_{i}$ constitute a hypothesis space $\mathcal H_{i}^{S}$. Similarly, all outputs of the residual connection $V_{j}$ constitute a hypothesis space $\mathcal H_{j}^{V}$. In this paper, our theoretical analysis is developed based on the two works stated above. Specifically, the covering bounds given by \cite{he2019why, bartlett2017spectrally} are respectively as follows.
	
	\begin{lemma}[see \cite{he2019why}, Theorem 1]
		\label{coverBoundGeneral}
		Suppose the $\varepsilon_{i}^{S}$-covering number of $\mathcal H_{i}^{S}$ is $\mathcal N_{i}^{S}$ and the $\varepsilon_{j}^{V}$-covering number of $\mathcal H_{j}^{V}$ is $\mathcal N_{i}^{V}$. Then there exists an $\varepsilon$ in terms of all $\varepsilon_{i}^{S}$ and $\varepsilon_{j}^{V}$, such that the following inequality holds:
		\begin{align}
		\label{formulaCoverBoundGeneral}
		\mathcal N(\mathcal H) \le \prod_{i = 1}^{L} \mathcal N_{i}^{S} \prod_{j \in J} \mathcal N^{j}_{V}.
		\end{align}
	\end{lemma}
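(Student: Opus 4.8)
The plan is to reduce the lemma to two elementary closure properties of covering numbers — one for function composition and one for function addition — and then to assemble a cover of the entire network inductively, walking from the input layer to the output layer of the stem and applying the composition rule at each stem layer while applying the addition rule wherever a residual branch $V_j$ rejoins the main path. This is precisely the strategy behind the spectral covering arguments of \cite{bartlett2017spectrally, he2019why}, specialized to the shortcut structure of the stem in eq. (\ref{stemStructure}) together with the residual index set $J$.

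First I would record the two facts that drive everything. For \emph{composition}: if $\mathcal G$ is a class of $\rho_{\mathcal G}$-Lipschitz maps whose inputs are bounded in norm by some $B_{\mathcal F}$ and which admits an $\varepsilon_{\mathcal G}$-cover of size $N_{\mathcal G}$ over that input domain, and $\mathcal F$ admits an $\varepsilon_{\mathcal F}$-cover of size $N_{\mathcal F}$, then forming $\hat{G} \circ \hat{F}$ over all pairs of cover elements yields a cover of $\mathcal G \circ \mathcal F$ of size at most $N_{\mathcal F} N_{\mathcal G}$; the split $\|G(F) - \hat{G}(\hat{F})\| \le \|G(F) - G(\hat{F})\| + \|G(\hat{F}) - \hat{G}(\hat{F})\| \le \rho_{\mathcal G}\varepsilon_{\mathcal F} + \varepsilon_{\mathcal G}$ gives the radius, and the Cartesian product of the two covers gives the count. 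For \emph{addition}: if $\mathcal F_1$ and $\mathcal F_2$ admit $\varepsilon_1$- and $\varepsilon_2$-covers of sizes $N_1, N_2$, then the pairwise sums of cover elements form an $(\varepsilon_1 + \varepsilon_2)$-cover of $\{ f_1 + f_2 \}$ of size $N_1 N_2$. These two operations are exactly the ones by which a residual network is built from its stem layers $\mathcal H_i^S$ and its skip branches $\mathcal H_j^V$.

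Then I would induct from input to output. I maintain the invariant that the sub-network computed up to the current node admits a cover whose size is the product of the $\mathcal N_i^S$ over the stem layers already traversed and the $\mathcal N_j^V$ over the residual branches already merged, and whose radius is a fixed Lipschitz-weighted combination of the corresponding $\varepsilon_i^S$ and $\varepsilon_j^V$. Absorbing each stem layer $(A_i, \sigma_i)$ through the composition fact multiplies the size by $\mathcal N_i^S$; absorbing each residual connection $V_j$ through the addition fact multiplies the size by $\mathcal N_V^j$. After the final layer the invariant gives $\mathcal N(\mathcal H) \le \prod_{i = 1}^L \mathcal N_i^S \prod_{j \in J} \mathcal N_V^j$, and the accumulated radius is exactly the $\varepsilon$ whose existence the lemma asserts — it is \emph{defined} by the recursion in terms of all the $\varepsilon_i^S$ and $\varepsilon_j^V$, not freely chosen.

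The step I expect to be the main obstacle is the radius bookkeeping for a graph with several merge points. The composition fact amplifies the radius carried by every upstream component by the Lipschitz constants of all downstream layers, so each component's contribution to the final $\varepsilon$ is weighted by a position-dependent product of Lipschitz constants; when a skip connection spans several layers, its branch and the main branch must be covered independently so that their radii add cleanly at the merge and their counts genuinely multiply without double counting. The delicate point is stating the induction invariant so that this holds, and in particular verifying that the bounded-input hypothesis $B_{\mathcal F}$ required by the composition fact is preserved at every node so that each layer's cover remains valid on the domain it is actually applied to. Once the invariant is phrased correctly, both the multiplicative count and the existence of the propagated $\varepsilon$ follow directly.
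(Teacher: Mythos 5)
The paper does not actually prove this lemma itself: it is quoted as imported background (Theorem 1 of \cite{he2019why}), so there is no in-paper proof to compare against, only the argument of that reference. Your reconstruction --- two closure properties of covers (Lipschitz composition, which multiplies counts and propagates radii amplified by downstream Lipschitz constants, and addition at residual merge points, which multiplies counts and sums radii), assembled by induction from input to output with the shared upstream cover indexed once so counts do not double --- is correct and is essentially the same argument used in \cite{he2019why} and in Lemma A.7 of \cite{bartlett2017spectrally} (the paper's Lemma \ref{coverChainBoundGeneral}), including your identification of the genuinely delicate points (position-dependent Lipschitz weighting of each $\varepsilon_{i}^{S}$, $\varepsilon_{j}^{V}$ in the final radius, and validity of each layer's cover on the domain of covered upstream outputs).
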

	
	\begin{lemma}[cf. \cite{bartlett2017spectrally}, Lemma A.7]
		\label{coverChainBoundGeneral}
		Suppose there are $L$ weight matrices in a chain-like neural network. Let $(\varepsilon_{1}, \ldots, \varepsilon_{L})$ be given. Suppose the $L$ weight matrices $(A_{1}, \ldots, A_{L})$ lies in $\mathcal B_{1} \times \ldots \times \mathcal B_{L}$, where $\mathcal B_{i}$ is a ball centered at $0$ with the radius of $s_{i}$, i.e., $\mathcal B_{i} = \{A_{i}: \| A_{i} \| \le s_{i}\}$. Furthermore, suppose the input data matrix $X$ is restricted in a ball centred at $0$ with the radius of $B$, i.e., $\| X \| \le B$. Suppose $F$ is a hypothesis function computed by the neural network. If we define:
		\begin{equation}
		\mathcal H = \{ F(X): A_{i} \in \mathcal B_{i}, A^{u,v,s}_{t} \in \mathcal B^{u,v,s}_{t} \},
		\end{equation}
		where $i = 1, \ldots, L$, $(u,v,s)\in I_{V}$, and $t \in \{1, \ldots, L^{u,v,s}\}$. Let $\varepsilon = \sum_{j = 1}^{L}\varepsilon_{j}\rho_{j}\prod_{l = j+1}^{L}\rho_{l}s_{l}$. Then we have the following inequality:
		\begin{align}
		\label{formulaCoverChainBoundGeneral}
		\mathcal N(\mathcal H) \le  \prod_{i=1}^{L} \sup_{\mathbf A_{i-1} \in \bm{\mathcal B}_{i-1}} \mathcal N_{i}, 
		\end{align}
		where $\mathbf A_{i-1} = (A_{1}, \ldots, A_{i-1})$, $\bm{\mathcal B}_{i-1} = \mathcal B_{1} \times \ldots \times \mathcal B_{i-1}$, and
		\begin{equation}
		\mathcal N_{i}  = \mathcal N \left( \left\{ A_{i}F_{\mathbf A_{i-1}}(X): A_{i} \in \mathcal B_{i} \right\} \varepsilon_{i}, \| \cdot \| \right).
		\end{equation}
	\end{lemma}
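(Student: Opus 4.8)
The plan is to prove the chain bound by induction on the number of layers $L$, constructing a cover of the network output one layer at a time while tracking how the covering radius is amplified as it propagates forward. Throughout I write $F_i = \sigma_i(A_i F_{i-1})$ for the output of the $i$-th layer with $F_0 = X$, so that the hypothesis is $F = F_L$; the shortcut weight matrices $A^{u,v,s}_t$ appearing in the definition of $\mathcal H$ do not belong to the chain and are held fixed throughout the recursion, entering only later when this lemma is combined with Lemma \ref{coverBoundGeneral}. The base case $L = 1$ is immediate: by definition $\mathcal N_1$ is the cardinality of a minimal $\varepsilon_1$-cover of $\{A_1 X : A_1 \in \mathcal B_1\}$, and composing each cover element with the $\rho_1$-Lipschitz map $\sigma_1$ yields a $\rho_1 \varepsilon_1$-cover of the output set, matching both the stated radius $\varepsilon = \varepsilon_1 \rho_1$ and the stated count.

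For the inductive step I would assume the claim for $L - 1$ layers, so the set of outputs $F_{L-1}$ admits a cover of cardinality $\prod_{i=1}^{L-1} \sup_{\mathbf A_{i-1}} \mathcal N_i$ at radius $\varepsilon' = \sum_{j=1}^{L-1} \varepsilon_j \rho_j \prod_{l=j+1}^{L-1} \rho_l s_l$. The core of the argument is the error-propagation estimate at the last layer. Fixing a cover element $\hat F_{L-1}$ (produced by some approximating weights $\hat{\mathbf A}_{L-1}$), I build a minimal $\varepsilon_L$-cover of $\{A_L \hat F_{L-1} : A_L \in \mathcal B_L\}$, which has at most $\sup_{\mathbf A_{L-1}} \mathcal N_L$ elements. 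For the true pre-activation $A_L F_{L-1}$ I pick the nearest element $\widehat{A_L \hat F_{L-1}}$ of this cover and split the error with the triangle inequality,
\begin{equation}
\| A_L F_{L-1} - \widehat{A_L \hat F_{L-1}} \| \le \| A_L \| \, \| F_{L-1} - \hat F_{L-1} \| + \varepsilon_L \le s_L \varepsilon' + \varepsilon_L ,
\end{equation}
where the first term is controlled by the spectral-norm bound $\| A_L \| \le s_L$ and the inductive radius $\varepsilon'$. Applying the $\rho_L$-Lipschitz nonlinearity $\sigma_L$ contracts this into a radius $\rho_L(s_L \varepsilon' + \varepsilon_L)$ on the layer-$L$ output.

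It then remains to verify the two bookkeeping identities. Expanding $\rho_L s_L \varepsilon' + \rho_L \varepsilon_L$ and folding the factor $\rho_L s_L$ into each product shows it equals $\sum_{j=1}^{L} \varepsilon_j \rho_j \prod_{l=j+1}^{L} \rho_l s_l$ (the $j = L$ term being $\varepsilon_L \rho_L$ with an empty product), which is exactly the telescoped radius $\varepsilon$ in the statement. For the count, the product construction attaches to each of the $\prod_{i=1}^{L-1} \sup_{\mathbf A_{i-1}} \mathcal N_i$ previous-layer cover elements the at-most $\sup_{\mathbf A_{L-1}} \mathcal N_L$ final-layer elements, giving $\prod_{i=1}^{L} \sup_{\mathbf A_{i-1}} \mathcal N_i$ as in eq. (\ref{formulaCoverChainBoundGeneral}). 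The step I expect to be the main obstacle is the subtlety that the final-layer cover is built for the surrogate input $\hat F_{L-1}$ rather than for the true input $F_{L-1}$; the displayed triangle-inequality estimate is precisely what guarantees it still covers $A_L F_{L-1}$, and it is exactly why the supremum over $\mathbf A_{i-1}$ must appear in each factor, so that a single per-layer count remains valid uniformly over every input generated by the preceding cover.
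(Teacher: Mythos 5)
Your induction is exactly the standard proof of this lemma: the paper itself gives no argument for it, quoting it verbatim from \cite{bartlett2017spectrally} (Lemma A.7), and your layer-wise composition of per-layer covers, the error-propagation estimate $\rho_L(s_L\varepsilon' + \varepsilon_L)$ via the triangle inequality with the spectral-norm bound, and the telescoped radius identity coincide with that source's proof. The surrogate-input subtlety you flag is treated the same way there, and it is harmless in the end because the matrix-covering bound ultimately instantiating each $\mathcal N_{i}$ (Lemma \ref{matrixCover}) depends on the layer input only through its norm, so the per-layer supremum applies uniformly to the cover elements $\hat F_{i-1}$ just as you assert.
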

	
	\subsection{Covering bound of FSTRN}
	
	This subsection gives a detailed proof for the covering bound of FSTRN. We first recall a result by Bartlett et al. \cite{bartlett2017spectrally}.
	\begin{lemma}[cf. \cite{bartlett2017spectrally}, Lemma 3.2]
		\label{matrixCover}
		Let conjugate exponents $(p, q)$ and $(r, s)$ be given with $p \le 2$, as well as positive reals $(a, b, \varepsilon)$ and positive integer $m$. Let matrix $X \in \mathbb R^{n \times d}$ be given with $\| X \|_{p} \le b$. Let $\mathcal H_{A}$ denote the family of matrices obtained by evaluating $X$ with all choices of matrix $A$:
		\begin{equation}
		\mathcal H_{A} \triangleq \left\{ XA | A \in \mathbb R^{d \times m}, \|A\|_{q, s} \le a \right\}.
		\end{equation}
		Then
		\begin{equation}
		\label{matrixCovBound}
		\log \mathcal N \left( \mathcal H_{A}, \varepsilon, \| \cdot \|_{2} \right) \le \ceil*{\frac{a^{2}b^{2}m^{2/r}}{\varepsilon^{2}}} \log(2dm).
		\end{equation}
	\end{lemma}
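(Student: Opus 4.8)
The plan is to prove Lemma~\ref{matrixCover} by the \emph{empirical (Maurey) approximation} method, the standard route to covering numbers of the image of a norm-ball under a fixed linear map. The key observation is that, for fixed $X$, every element $XA$ of the family $\mathcal H_{A} = \{XA : \|A\|_{q,s} \le a\}$ can be written as the mean of a distribution supported on a finite ``atom'' set. Writing $A$ column by column, each column $a_{j} \in \mathbb R^{d}$ obeys an $\ell_{q}$ budget inherited from the $(q,s)$-norm constraint, and $X a_{j} = \sum_{i} (a_{j})_{i}\, X_{:,i}$ is a signed, rescaled average of the columns of $X$. Tensoring over the $m$ output columns, $XA$ is thus the expectation of a distribution over roughly $2dm$ atoms, each of $\ell_{2}$-norm controlled jointly by $\|X\|_{p} \le b$ and the radius $a$. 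The hypothesis $p \le 2$ is exactly what places the problem in a Hilbert-space regime where $\ell_{2}$ approximation of such convex means is available.

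The second step is Maurey's sparsification lemma: any point expressible as the mean of a distribution over atoms of $\ell_{2}$-norm at most $c$ can be approximated, in $\ell_{2}$, by an average of $k$ independent samples with expected squared error at most $c^{2}/k$. I would set the target error equal to $\varepsilon$ and solve for the number of atoms, which forces
\begin{equation}
k = \left\lceil \frac{a^{2} b^{2} m^{2/r}}{\varepsilon^{2}} \right\rceil .
\end{equation}
The factor $m^{2/r}$ is the one genuinely delicate point: it arises when the outer $\ell_{s}$ aggregation across the $m$ columns imposed by $\|A\|_{q,s} \le a$ is converted, via H\"older / power-mean inequalities under the conjugacy $1/r + 1/s = 1$, into the Euclidean aggregation demanded by the Maurey bound. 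Checking that the per-column atom norms and this cross-column conversion combine to give exactly $a^{2} b^{2} m^{2/r}$ is the main bookkeeping obstacle, and is where the two conjugate-exponent pairs $(p,q)$ and $(r,s)$ must be used in concert; everything else is comparatively mechanical.

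The final step is a counting argument. By construction, the set of all $k$-fold averages of atoms drawn from the $2dm$-element atom set is an $\varepsilon$-cover of $\mathcal H_{A}$ in $\|\cdot\|_{2}$, and there are at most $(2dm)^{k}$ such averages. Taking logarithms yields
\begin{equation}
\log \mathcal N\!\left(\mathcal H_{A}, \varepsilon, \|\cdot\|_{2}\right) \le k \log(2dm) = \left\lceil \frac{a^{2} b^{2} m^{2/r}}{\varepsilon^{2}} \right\rceil \log(2dm),
\end{equation}
which is precisely eq.~(\ref{matrixCovBound}). I expect the convex-mean reduction of the first step and the $m^{2/r}$ aggregation of the second to absorb essentially all of the work: once the atom norms are pinned down correctly, Maurey's lemma supplies the approximation and the crude $(2dm)^{k}$ count closes the estimate with no further analysis.
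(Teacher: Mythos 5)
Your proposal is correct, and it follows exactly the proof used for this lemma in its source: express $XA$ as a (sub-)convex combination of the $2dm$ signed atoms built from the $\ell_p$-normalized columns of $X$ tensored with output coordinates (Hölder over $(p,q)$ within a column, over $(r,s)$ across columns giving the $a b m^{1/r}$ mass bound, with $p \le 2$ ensuring the atoms have $\ell_2$-norm at most $a b m^{1/r}$), then Maurey sparsification with $k = \lceil a^2 b^2 m^{2/r} / \varepsilon^2 \rceil$ and the $(2dm)^k$ count — this is precisely the argument behind Lemma 3.2 of \cite{bartlett2017spectrally}. Note that the paper you are reading does not prove this lemma at all; it imports it by citation as a building block for Theorem \ref{thmCovBoundFSTRN}, so your blind reconstruction matches the original cited proof rather than anything carried out in this paper.
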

	This covering bound constrains the hypothesis complexity contributed by a single weight matrix.
	
	
	As Figure \ref{fig:SR_architecture} shows, suppose all hypothesis functions $F_{0}^{L}, F_{1}^{L}, \ldots, F_{D}^{L}, F_{Up}^{L}, F_{SR}^{L}$ respectively constitute a series of hypothesis spaces $\mathcal H_{0}^{L}, \mathcal H_{1}^{L}, \ldots, \mathcal H_{D}^{L}, \mathcal H_{Up}^{L}, \mathcal H_{SR}^{L}$. For the brevity, we rewrite those notations respectively as $F_{0}^{L}, F_{1}^{L}, \ldots, F_{D}^{L}, F_{D+1}^{L}, F_{D+2}^{L}$, and $\mathcal H_{0}^{L}, \mathcal H_{1}^{L}, \ldots, \mathcal H_{D}^{L}, \mathcal H_{D+1}^{L}, \mathcal H_{D+2}^{L}$.
	Also, suppose the covering number respectively with the radiuses $\varepsilon_{0}^{L}, \varepsilon_{1}^{L}, \ldots, \varepsilon_{D}^{L}, \varepsilon_{D+1}^{L}, \varepsilon_{D+2}^{L}$ are $\mathcal N(\mathcal H_{0}^{L}), \mathcal N(\mathcal H_{1}^{L}), \ldots, \mathcal N(\mathcal H_{D}^{L}), \mathcal N(\mathcal H_{D+1}^{L}), \mathcal N(\mathcal H_{D+2}^{L})$.
	
	\begin{proof}[Proof of Theorem \ref{thmCovBoundFSTRN}]
		Employing Lemma \ref{coverBoundGeneral}, we can straight obtain the following inequality.
		\begin{equation}
		\label{inequalityGeneral}
		\log \mathcal N(\mathcal H) \le \sum_{d=0}^{D}\log \mathcal N(\mathcal H_{d}^{L}).
		\end{equation}
		Applying eq. (\ref{matrixCovBound}) of Lemma \ref{matrixCover}, we can obtain the following result. We first calculate the covering bound of FRBs. Denote the PReLU in the $d$-th FRB as $\sigma^{d}$ and denote the weight matrices corresponding to the $2$ convolutional layers respectively as $A_{1}^{d}$ and $A_{2}^{d}$. Then, for $d = 1, \ldots, D$,
		\begin{align}
		\label{FRBGenCovBound}
		& \log \mathcal N(\mathcal H_{d + 1}) \nonumber\\
		\le & \frac{(b_{1}^{d+1})^{2} \| 		\sigma^{d}(F_{d}(X^{T})^{T}) \|_{2}^{2}}{(\varepsilon_{1}^{d+1})^{2}} \log(2W^{2}) \nonumber\\
		& + \frac{(b_{2}^{d+1})^{2} \| A_{1}^{d+1}\sigma^{d+1}(F_{d}(X^{T})^{T}) \|_{2}^{2}}{(\varepsilon_{2}^{d+1})^{2}} \log(2W^{2}).
		\end{align}
		Apparently,
		\begin{equation}
		\label{normEq1}
		\|\sigma^{d+1}(F_{d}(X^{T})^{T})\|_{2}^{2} \le (\rho^{d+1})^{2} \|F_{d}(X^{T})^{T}\|_{2}^{2},
		\end{equation}
		and
		\begin{align}
		\label{normEq2}
		& \|A_{1}^{d+1}\sigma^{d+1}(F_{d}(X^{T})^{T})\|_{2}^{2} \nonumber\\
		\le & (s_{1}^{d+1})^{2} \|\sigma^{d+1}(F_{d}(X^{T})^{T})\|_{2}^{2} \nonumber\\
		\le & (s_{1}^{d+1} \rho^{d+1})^{2} \|F_{d}(X^{T})^{T}\|_{2}^{2}.
		\end{align}
		Also, motivated by the proof of Lemma 4.3 of \cite{he2019why}, we can obtain the following equations.
		\begin{equation}
		\label{epEq1}
		\varepsilon_{1}^{d+1} = \varepsilon_{d}^{L} \rho^{d+1},
		\end{equation}
		\begin{equation}
		\label{epEq2}
		\varepsilon_{2}^{d+1} = \varepsilon_{1}^{d+1} (1 + s^{d+1}_{1}) = \varepsilon_{d}^{L} \rho^{d+1} (1 + s^{d+1}_{1}),
		\end{equation}
		and
		\begin{align}
		\label{epEq3}
		\varepsilon_{d+1}^{L} & = \varepsilon_{2}^{d+1} (1 + s^{d+1}_{2}) \nonumber\\
		& = \varepsilon_{d}^{L} \rho^{d+1} (1 + s^{d+1}_{1}) (1 + s^{d+1}_{2}).
		\end{align}
		Applying eqs. (\ref{normEq1}), (\ref{normEq2}), (\ref{epEq1}), (\ref{epEq2}), (\ref{epEq3}) to eq. (\ref{FRBGenCovBound}), we can obtain a covering bound for FRBs as follows.
		\begin{align}
		\label{FRBCovBound}
		& \log \mathcal N(\mathcal H_{d + 1}) \nonumber\\
		\le & \frac{\| F_{d}(X) \|^{2}_{2}}{(\varepsilon_{d+1}^{L})^{2}}\log \left(2W^{2}\right) (\rho^{d+1})^{2} \nonumber\\
		& \left[ (b_{1}^{d+1})^{2}(1+s_{1}^{d+1})^{2} + (b_{2}^{d+1})^{2}(s_{1}^{d+1})^{2} \right].
		\end{align}
		
		By applying eq. (\ref{normEq2}) and the induction method, we can straight get the following inequality:
		\begin{align}
		\label{inequalityNdp1}
		& \log \mathcal N(\mathcal H_{d + 1}) \nonumber\\
		\le & \prod_{i = 1}^{d} \left[ \left( \rho^{i} s_{1}^{i} s_{2}^{i} \right)^{2} + 1 \right] \left[ \left(b_{1}^{d}\right)^{2} \left(1 + s_{2}^{d}\right)^{2} + \left(b_{2}^{d}s_{1}^{d} \right)^{2} \right] \nonumber\\
		& \left(\frac{\| X \|_{2}s_{1}\rho^{d}}{\varepsilon^{d}}\right)^{2}.
		\end{align}
		Similarly, we can also get the following inequalities.
		\begin{equation}
		\label{inequalityN1}
		\log \mathcal N(\mathcal H_{1}) \le \frac{b_{1}^{2} \| X \|^{2}_{2}\bar\alpha}{\varepsilon^{2}} \log\left(2W^{2}\right),
		\end{equation}
		\begin{align}
		\label{inequalityNDp1}
		\log \mathcal N(\mathcal H_{D+1}) \le & \left\{ 1 + \sum_{i = 1}^{D}\prod_{j=1}^{i}\left[ (\rho^{j})^{2} (s^{j}_{1}s^{j}_{2})^{2} + 1 \right] \right\} \nonumber\\
		& \| X \|^{2}_{2} \rho_{1}^{2} \frac{b_{2}^{2}}{\varepsilon_{2}^{2}} \log\left(2W^{2}\right) \frac{b_{2}^{2}}{\varepsilon_{2}^{2}},
		\end{align}
		\begin{align}
		\label{inequalityNDp2}
		\log \mathcal N(\mathcal H_{D+2}) \le & \left\{ 1 + \sum_{i = 1}^{D}\prod_{j=1}^{i}\left[ (\rho^{j})^{2} (s^{j}_{1}s^{j}_{2})^{2} + 1 \right] \right\} \nonumber\\
		& \| X \|^{2}_{2} \rho_{1}^{2} \frac{b_{2}^{2}}{\varepsilon_{2}^{2}} \log\left(2W^{2}\right) s_{2}^{2} \frac{b_{3}^{2}}{\varepsilon_{3}^{2}} \nonumber\\
		& + \frac{b_{1}^{2} \| X \|^{2}_{2}}{\varepsilon^{2}}\log\left(2W^{2}\right).
		\end{align}
		Applying eqs. (\ref{inequalityNdp1}, \ref{inequalityN1}, \ref{inequalityNDp1}, and \ref{inequalityNDp2}) to eq. (\ref{inequalityGeneral}), we eventually prove the theorem.	
	\end{proof}
	
	\subsection{Generalization Bound for FSTRN}
	
	The Theorem 2 is the same as Theorem 4.4 of \cite{he2019why}. For the completeness of this paper, we restate the proof here.
	
	\begin{proof}[Proof of Theorem 2]
		We prove this theorem in $2$ steps: (1) First apply Lemma \ref{covNumBound} to get an upper bound on the Rademacher complexity; and then (2) Apply the result of (1) to Lemma \ref{GenBound} in order to get a generalization bound.
		
			(1) {\it Upper bound on the Rademacher complexity.}
			
			Applying eq. (\ref{formulaCovNumBound}) of Lemma \ref{covNumBound}, we can get the following inequality:
			\begin{align}
			\mathfrak R(\mathcal H_{\lambda}|_{D}) \le & \inf_{\alpha > 0} \left( \frac{4\alpha}{\sqrt{n}} + \frac{12}{n} \int_{\alpha}^{\sqrt n} \sqrt{\log \mathcal N(\mathcal H)} \text{d}\varepsilon \right) \nonumber\\
			\le & \inf_{\alpha > 0} \left( \frac{4\alpha}{\sqrt{n}} + \frac{12}{n} \int_{\alpha}^{\sqrt n} \frac{\sqrt{R}}{\varepsilon} \text{d}\varepsilon \right) \nonumber\\
			\le & \inf_{\alpha > 0} \left( \frac{4\alpha}{\sqrt{n}} + \frac{12}{n} \sqrt{R} \log\frac{\sqrt{n}}{\alpha} \right).
			\end{align}
			Apparently, the infinimum is reached uniquely at $\alpha = 3\sqrt{\frac{R}{n}}$. Here, we use a choice $\alpha = \frac{1}{n}$, and get the following inequality:
			\begin{equation}
			\mathfrak R(\mathcal H_{\lambda}|_{D}) \le \frac{4}{n^{\frac{3}{2}}} + \frac{18}{n} \sqrt{R} \log n.
			\end{equation}
			
			(2) {\it Upper bound on the generalization error.}
			
			Combining with Lemma \ref{GenBound}, we get the following inequality:
			\begin{align}
			& \Pr\{ \arg\max_{i} F(x)_{i} \ne y \} \nonumber\\
			\le & \hat{\mathcal R}_{\lambda}(F) + \frac{8}{n^{\frac{3}{2}}} + \frac{36}{n} \sqrt{R} \log n + 3 \sqrt{\frac{\log(1/\delta)}{2n}}.
			\end{align}
		The proof is completed.
	\end{proof}
	
	
	\section{Empirical Results}
	\label{sec:sup_exp}
	
	This appendix collects all empirical results omitted from the main text. Our algorithm outperforms the state-of-the-art methods in both qualitative and quantitative aspects.
	
	\subsection{Quantitatively Results}
	
	The quantitative results of all the methods on Vid4 \cite{DBLP:conf/cvpr/LiuS11} are summarized in Table \ref{tab:sup_psnr_ssim}, where the evaluation measures are the PSNR and SSIM indices. As demonstrated in Table \ref{tab:sup_psnr_ssim}, our algorithm has excellent robustness in different scenarios and outperforms all other methods.
	
	\begin{table*}[htbp]
		\centering
		\begin{tabular}{|c|c|c|c|c|c|c|}
			\hline
			\multirow{2}*{Methods}&City&Calendar&Walk&Foliage&Average\\
			& PSNR / SSIM & PSNR / SSIM & PSNR / SSIM & PSNR / SSIM & PSNR / SSIM\\
			\hline
			\hline
			Bicubic&24.82 / 0.58&19.98 / 0.55&25.33 / 0.78&22.91 / 0.54&23.25 / 0.62\\
			SRCNN\cite{DBLP:conf/eccv/DongLHT14}&25.46 / 0.65&21.08 / 0.65&27.16 / 0.84&24.05 / 0.66&24.47 / 0.71\\
			SRGAN\cite{DBLP:conf/cvpr/LedigTHCCAATTWS17}&25.30 / 0.64&21.04 / 0.64&26.55 / 0.81&23.69 / 0.62&24.16 / 0.68\\
			RDN\cite{DBLP:conf/icip/XuCSD18}&25.59 / 0.66&20.99 / 0.63&27.19 / 0.83&24.05 / 0.66&24.49 / 0.70\\
			BRCN\cite{DBLP:conf/nips/HuangWW15}&25.46 / 0.64&21.10 / 0.64&27.06 / 0.84&24.03 / 0.65&24.44 / 0.70\\
			VESPCN\cite{DBLP:conf/cvpr/CaballeroLAATWS17}&25.55 / 0.66&21.07 / 0.65&27.17 / 0.84&24.08 / 0.67&24.50 / 0.71\\
			\textbf{FSTRN}(ours)&\textbf{25.76} / \textbf{0.68}&\textbf{21.36} / \textbf{0.68}&\textbf{27.57} / \textbf{0.85}&\textbf{24.21} / \textbf{0.67}&\textbf{24.76} / \textbf{0.72}\\
			\hline
			
		\end{tabular}
		\caption{Comparison of the PSNR and SSIM results on vid4 \cite{DBLP:conf/cvpr/LiuS11} sequences by Bicubic, SRCNN\cite{DBLP:conf/eccv/DongLHT14}, SRGAN\cite{DBLP:conf/cvpr/LedigTHCCAATTWS17}, RDN\cite{DBLP:conf/icip/XuCSD18}, BRCN\cite{DBLP:conf/nips/HuangWW15}, VESPCN\cite{DBLP:conf/cvpr/CaballeroLAATWS17}, and our FSTRN with scale factor $4$.}
		\label{tab:sup_psnr_ssim}
	\end{table*}
	
	\subsection{Qualitatitve Results}
	
	We also qualitatively compare our algorithm with several existing algorithms, Bicubic, SRCNN\cite{DBLP:conf/eccv/DongLHT14}, SRGAN\cite{DBLP:conf/cvpr/LedigTHCCAATTWS17}, RDN\cite{DBLP:conf/icip/XuCSD18}, BRCN\cite{DBLP:conf/nips/HuangWW15}, VESPCN\cite{DBLP:conf/cvpr/CaballeroLAATWS17}, and our FSTRN. The comparison experiments are all with scale factor $4$. The qualitative results also illustrate the excellent performance of our algorithm.
	
	\begin{figure*}[htbp]
		\centering
		\begin{minipage}[t]{\linewidth}
			\centering
			\includegraphics[width=1\linewidth]{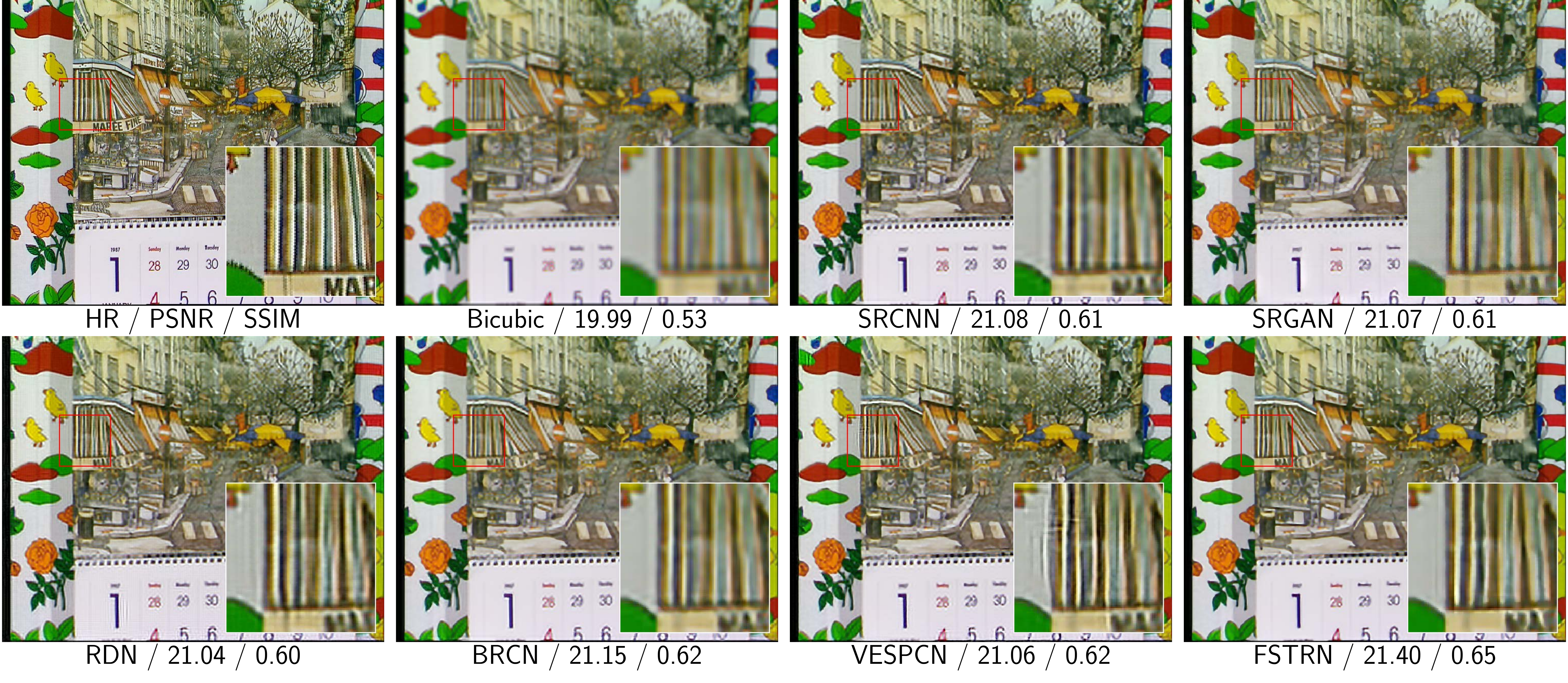} 
		\end{minipage}
		
		\caption{Visual comparisons of the super-resolution results for video \textbf{Calendar} on $\times 4$ upscaling factor.}
		\label{fig:calendar}
	\end{figure*}
	
	\vspace{3cm}
	
	\begin{figure*}[htbp]
		\centering
		\begin{minipage}[t]{\linewidth}
			\centering
			\includegraphics[width=1\linewidth]{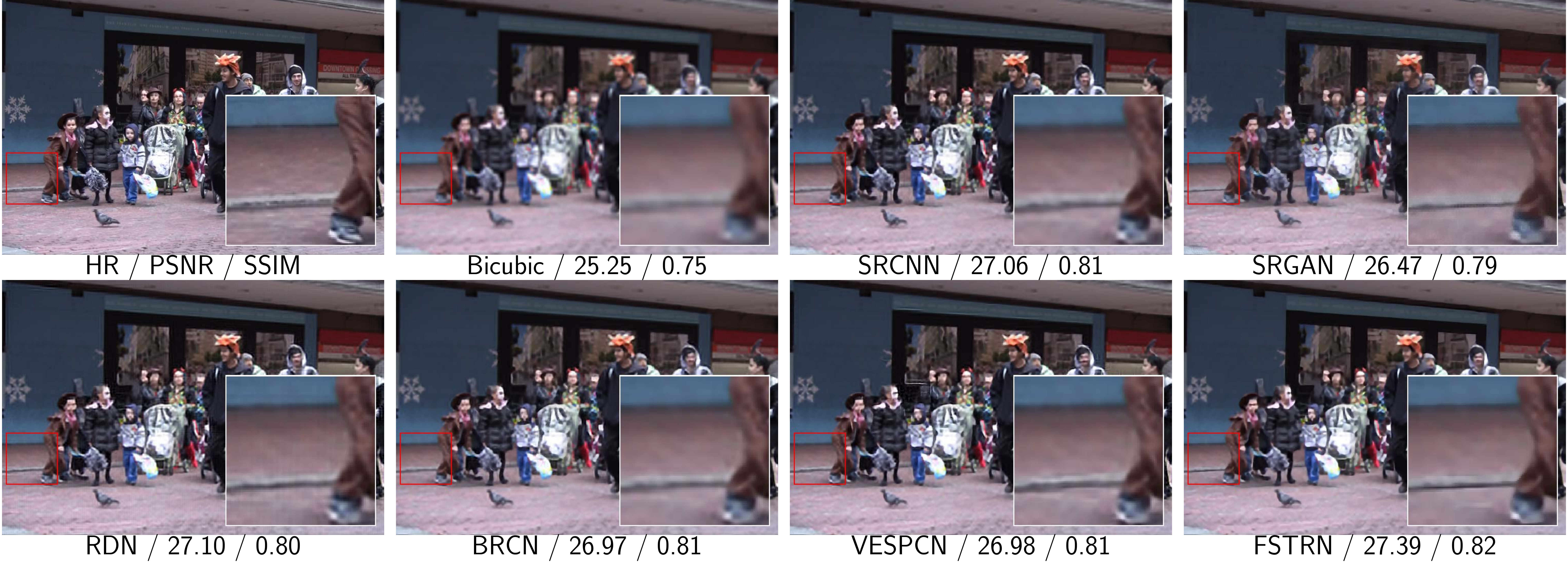}
		\end{minipage}
		
		\caption{Visual comparisons of the super-resolution results for video \textbf{Walk} on $\times 4$ upscaling factor.}
		\label{fig:walk}
	\end{figure*}
	
	\begin{figure*}[htbp]
		\centering
		\begin{minipage}[t]{\linewidth}
			\centering
			\includegraphics[width=1\linewidth]{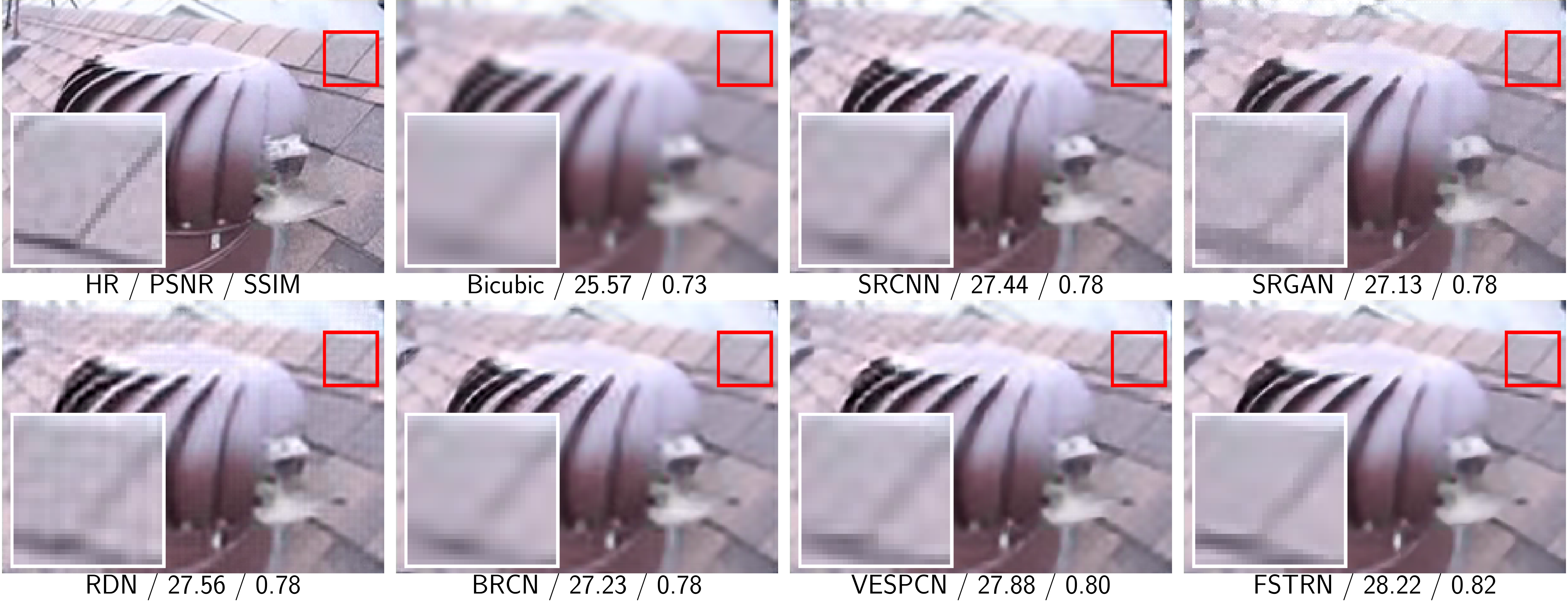}
		\end{minipage}
		
		\caption{Visual comparisons of the super-resolution results for video \textbf{Turbine} on $\times 4$ upscaling factor.}
		\label{fig:Turbine}
	\end{figure*}
	
	\begin{figure*}[htbp]
		\centering
		\begin{minipage}[t]{\linewidth}
			\centering
			\includegraphics[width=1\linewidth]{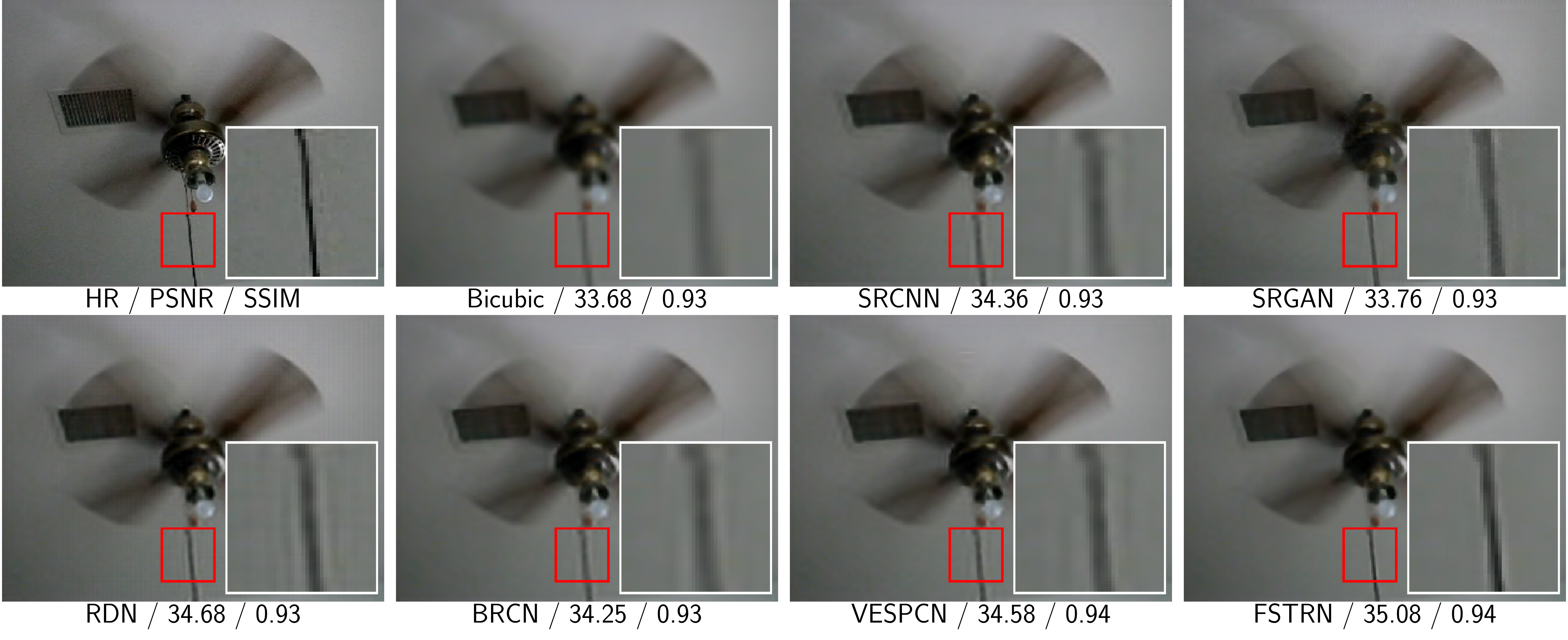}
		\end{minipage}
		
		\caption{Visual comparisons of the super-resolution results for video \textbf{Fan} on $\times 4$ upscaling factor.}
		\label{fig:Star}
	\end{figure*}

\end{appendix}

\end{document}